\newtheorem{theorem}{Theorem}
\newtheorem{lemma}[theorem]{Lemma}
\newtheorem{definition}[theorem]{Definition}
\newtheorem{remark}[theorem]{Remark}
\newtheorem{assumption}[theorem]{Assumption}
\newcommand{\argmin}{\mathop{\mathrm{argmin}}}
\newcommand{\argmax}{\mathop{\mathrm{argmax}}}
\def\E{\mathbb{E}}
\def\1{\mathbbm{1}}
\def\cF{\mathcal{F}}
\def\cH{\mathcal{H}}
\def\cX{\mathcal{X}}
\title{Bayesian Optimization with Inexact Acquisition:\\Is Random Grid Search Sufficient?}
\author[1]{Hwanwoo Kim}
\author[2]{Chong Liu}
\author[3]{Yuxin Chen}
\affil[1]{%
    Department of Statistical Science\\
Duke University\\
Durham, NC, USA
}
\affil[2]{%
    Department of Computer Science\\
University at Albany, State University of New York\\
Albany, NY, USA
}
\affil[3]{%
    Department of Computer Science\\
University of Chicago\\
Chicago, IL, USA
  }
\begin{document}
\maketitle

\begin{abstract}
Bayesian optimization (BO) is a widely used iterative algorithm for optimizing black-box functions. Each iteration requires maximizing an acquisition function, such as the upper confidence bound (UCB) or a sample path from the Gaussian process (GP) posterior, as in Thompson sampling (TS). However, finding an exact solution to these maximization problems is often intractable and computationally expensive. Reflecting realistic scenarios, this paper investigates the effect of using inexact acquisition function maximizers in BO. Defining a measure of inaccuracy in acquisition solutions, we establish cumulative regret bounds for both GP-UCB and GP-TS based on imperfect acquisition function maximizers. Our results show that under appropriate conditions on accumulated inaccuracy, BO algorithms with inexact maximizers can still achieve sublinear cumulative regret. Motivated by such findings, we provide both theoretical justification and numerical validation for random grid search as an effective and computationally efficient acquisition function solver. 
\end{abstract}

% \begin{keywords}%
% Bayesian optimization, Acquisition function, Inexact optimization, Gaussian process, Thompson sampling, Bandits
% \end{keywords}

\section{Introduction}\label{sec:intro}
Bayesian optimization (BO) is a class of machine learning-based black-box optimization strategies for finding a global optimum of a real-valued function $f$. Typically, $f$ is a function whose analytical form is unavailable, but it can be evaluated with a substantial computational cost. Due to its black-box nature, BO has demonstrated its efficacy across a broad spectrum of practical applications, such as tuning hyperparameters in deep learning \citep{wu2020practical,turner2021bayesian,kandasamy2020tuning}, searching for neural network architectures \citep{kandasamy2018neural,zhou2019bayesnas,white2021bananas}, designing materials \citep{frazier2016bayesian,zhang2020bayesian,lei2021bayesian}, and discovering new drugs \citep{korovina2020chembo,bellamy2022batched,colliandre2023bayesian,yang2019batched}.

BO follows a sequential decision-making process, where the outcome of each iteration informs the selection of the next evaluation point. After each step, the surrogate model is updated, thereby guiding the selection of the next point for function evaluation. This process is built around two key components: (1) updating a Gaussian process (GP) surrogate model and (2) optimizing the acquisition function to decide the next evaluation point. The Gaussian process surrogate is favored for modeling the objective function $f$ due to its capability to iteratively incorporate all previous observations, possibly corrupted by noise. The acquisition function $\alpha_t$ directs the optimization by solving for the next evaluation location, denoted by $x_t$,
\begin{align*}
x_t =\argmax_{x \in \cX} \alpha_t (x).
\end{align*}
This function balances the exploration of the search space with the exploitation of the accurately estimated surrogate model.  
In BO, this is referred to as \textit{acquisition (inner) optimization} \cite{wilson2018maximizing}, while optimization of the whole objective function is termed \textit{outer optimization}. 
Various strategies for designing the acquisition function exist, with popular options including Expected Improvement (EI) \citep{jones1998efficient}, Knowledge Gradient (KG) \citep{frazier2009knowledge}, Probability of Improvement (PI) \citep{kushner1964new}, Upper Confidence Bound (UCB) \citep{srinivas2010gaussian}, and Thompson Sampling (TS) \citep{thompson1933likelihood}, each offering a unique approach to manage the exploration-exploitation balance.

\subsection{Inexact Acquisition Maximization}

Optimizing the acquisition function $\alpha_t$ is generally considered more straightforward and computationally efficient than directly optimizing the objective $f$, largely because $\alpha_t$ is defined through a Gaussian process surrogate that offers a tractable approximation to the otherwise complex, black-box function. However, theoretical approaches often require an \textit{exact solution} $x_t$ for establishing regret bounds, a task that becomes particularly difficult in practice due to the possible non-convex nature of the acquisition function. This situation raises an important question:
\begingroup
% \addtolength\leftmargini{-0.4in}
\begin{quote}
What are the implications of 
\textit{inexact} acquisition function maximization in BO?
\end{quote}
\endgroup

Despite the widespread success of Bayesian optimization, this question has been surprisingly overlooked for an extended period and remains unexplored. Moreover, besides Bayesian optimization, this problem widely exists in bandits. Take the classical LinUCB algorithm \citep{abbasi2011improved} as an example, the acquisition function optimization requires an exact optimization solution of context and model parameters, which is highly intractable in practice. And the problem is even worse when it comes to generalized linear bandits and non-linear bandits. Recently, there has been a growing line of research on bandits and global optimization with neural network approximation \citep{zhou2020neural,zhang2020neural,dai2022sample} where the inexact acquisition function optimization problem still exists. In real-world applications, algorithms commonly employ quasi-Newton methods or random search to optimize the acquisition function, resulting in an approximate solution. This practice diverges from the theoretical assumptions of Bayesian optimization, which typically presume that an exact solution $x_t$ is attainable at every iteration $t$. Therefore, we need a systematic study on this problem to understand what theoretical guarantee one can provide when an exact acquisition function solution cannot be obtained. In Table \ref{tab:related}, we summarize existing works that address inexactness in bandit optimization and point out the unique position of our paper in bandit optimization. In the subsection below, we discuss related works.

\subsection{Related Works}\label{sec:rw}
\textbf{Optimization of acquisition functions.}
The optimization of acquisition functions has been a long-neglected issue in BO. \citet{wilson2018maximizing} first studied acquisition function maximization in Bayesian optimization. Since acquisition functions are usually non-convex and high-dimensional, they focused on how to \emph{approximately} optimize acquisition functions. They found that acquisition functions estimated
via Monte Carlo integration are consistently amenable to gradient-based optimization, and EI and UCB can be solved by greedy approaches. Although both their work and this paper study the inexact acquisition function optimization problem, our research distinguishes itself by concentrating on the impact that inexact solutions have on Bayesian optimization.

\begin{table*}[t]
\centering
% \resizebox{6in}{!}{
\begin{tabular}{cccc}
%\noalign{\smallskip} 
\toprule \noalign{\smallskip}
\textbf{Problem settings}               & Model misspecification & Approximate posterior & Inexact acquisition function \\ \noalign{\smallskip} \hline \noalign{\smallskip}
\textit{Multi-armed bandits}   & \citet{lattimore2020learning}                 & \citet{phan2019thompson}               & \citet{wang2018thompson}                \\
&\citet{foster2020adapting}  & \citet{lu2017ensemble} &  \citet{kong2021hardness};~\citet{perrault2022combinatorial}              \\
%& & & \citet{perrault2022combinatorial}\\ 
\noalign{\smallskip} \hline \noalign{\smallskip}
\textit{Bayesian optimization} & \citet{bogunovic2021misspecified}           & \citet{vakili2021scalable, vakili2022improved}               & {This paper}                     \\ \noalign{\smallskip} \bottomrule \noalign{\smallskip}
\end{tabular}
%}
\caption{Theoretical study of inexactness in bandit optimization
}
\label{tab:related}
\end{table*}

Inexact acquisition function optimization has also been explored in the context of combinatorial semi-bandits, where the decision-making process involves choosing from a combinatorial set of actions to optimize a reward function under constraints. For instance, \citet{xu2021simple} and \citet{ross2013learning} investigate the intricacies of making efficient selections amidst a combinatorial structure of actions by leveraging contextual information to inform the selection of action subsets. These works consider an $\alpha$-approximate oracle \citep{kakade2007playing}, focusing on a relaxed version of regret to account for the inexactness of acquisition optimization. Regret is defined as the difference in reward between the pulled (super-) arm with an approximation of the best reward. \citet{wang2018thompson} show that in general, one cannot achieve no-regret with an approximation oracle in Thompson sampling, even for the classical multi-armed bandit problem. \citet{kong2021hardness} revealed that linear approximation regret for combinatorial Thompson sampling is pathological, while \citet{perrault2022combinatorial} studies a specific condition on the approximation oracle, allowing a reduction to the exact oracle analysis and thus attaining sublinear regret for combinatorial semi-bandits. In contrast to these works concerning multi-armed bandits, our work focuses on Gaussian process bandits and proposes sufficient conditions that allow popular BO algorithms to achieve sublinear regret.

\textbf{Random sampling and discretization methods.}
Random discretization methods have been popular approaches in BO to address the challenges posed by non-convex and high-dimensional acquisition functions. One source of inexactness in acquisition function optimization is due to the discretization inherent in these methods. \citet{bergstra2012random} demonstrate that random search is often more effective than grid search for hyperparameter optimization, particularly as the dimensionality of the problem increases. They show that random search explores a larger, less promising configuration space, but still finds better models within a smaller fraction of the computation time compared to grid search. This approach is beneficial in BO as well, where the curse of dimensionality makes grid-based methods impractical. Our work provides a theoretical understanding of when random sampling works well in the context of inexact acquisition function maximization.
More recently, \citet{gramacy2022triangulation} propose using candidates based on a Delaunay triangulation of the existing input design for Bayesian optimization. These ``tricands'' outperform both numerically optimized acquisitions and random candidate-based alternatives on benchmark synthetic and real simulation experiments. Similarly, \citet{wycoff2024voronoi} introduce the use of candidates lying on the boundary of the Voronoi tessellation of current design points. This approach significantly improves the execution time of multi-start continuous search without a loss in accuracy, by efficiently sampling the Voronoi boundary without explicitly generating the tessellation, thus accommodating large designs in high-dimensional spaces. These methods leverage geometric structures to provide more efficient candidate sets for the acquisition optimization process, aligning with our focus on inexact acquisition function optimization.

\textbf{Misspecified and approximate inference in bandits. %optimization.
}
Besides acquisition functions (inner optimization), inexactness also occurs in the (outer) optimization loop of bandit problems, where the objective function class is misspecified with respect to the modeling function class. For instance, in the misspecified linear bandits setting \citep{lattimore2020learning}, the true objective may be nonlinear, while in misspecified Gaussian process bandit optimization \citep{bogunovic2021misspecified}, the objective does not necessarily lie in a function class with a bounded RKHS norm.

\citet{phan2019thompson} studied the effects of approximate inference on the performance of Thompson sampling in $k$-armed bandit problems where only an approximate posterior distribution can be used. With $\alpha$-divergence governing the difference between approximate and true posterior distributions, they proposed a new algorithm that works with sublinear regret in this challenging scenario. In the context of Bayesian optimization, \citet{vakili2021scalable, vakili2022improved} introduced inducing-point-based sparse Gaussian process approximations to address the scalability challenges of full Gaussian process models. In contrast, our work assumes an exact posterior and focuses exclusively on the inexactness introduced by imperfect acquisition function optimization.

\subsection{Our Contributions}
In this paper, we address the problem of inexact acquisition function maximization in Bayesian optimization by analyzing its impact on regret and demonstrating that random grid search is a theoretically justified and practical acquisition solver. We focus on two popular Bayesian optimization algorithms, GP-UCB \citep{srinivas2010gaussian} and GP-TS \citep{chowdhury2017kernelized}, and study their cumulative regret bound when acquisition function maximization problems are not perfectly solved. Our key contributions are summarized as follows.
\begin{itemize}
    \item To the best of our knowledge, our paper is the first to theoretically study the effect of the inexact acquisition function solutions in BO. Despite its widespread practical use, the effect of inexact acquisition maximization has been largely overlooked in the literature. 
    Our work systematically answers the question: ``How do inexact acquisition solutions impact the regret of BO algorithms, and under what conditions do they still guarantee convergence?'' More broadly, 
    our results bridge an important gap in inexact bandit optimization, complementing prior studies on model misspecification and approximate inference in multi-armed bandits and Bayesian optimization (see Table~\ref{tab:related}).
    \item Formally, we introduce a measure of inaccuracy in acquisition solution, worst-case accumulated inaccuracy, and we establish the cumulative regret bounds of both inexact GP-UCB and GP-TS. Our analysis quantifies how the cumulative impact of inexact acquisition solutions influences regret and establishes sufficient conditions under which inexact BO algorithms can still achieve sublinear regrets. These results generalize classical BO regret bounds to more realistic settings where exact acquisition maximization is infeasible.
    \item We theoretically justify random grid search as a valid acquisition solver for BO.
    Our regret bounds show that even with a linearly growing grid size $|\mathcal{X}_t| = \Theta(t)$, random grid search achieves sublinear regret. This significantly relaxes the condition in prior work \citep{chowdhury2017kernelized}, which required an exponentially larger grid size $t^{2d}$, demonstrating that random grid search is both computationally efficient and theoretically grounded. 
    \item %Our experimental results also validate the effectiveness of random grid search in solving UCB-type acquisition function maximization.  
    We empirically validate the efficiency of random grid search over the existing acquisition function solvers in the context of BO. Our experiments confirm that random grid search not only maintains strong regret performance but also offers substantial computational savings when solving acquisition functions, reinforcing its practical viability as an acquisition function solver.
    \item The Python code to reproduce the numerical experiments in Section \ref{sec:exp} is available at
\url{https://github.com/hwkim12/INEXACT_UCB_GRID}.
\end{itemize}

\section{Preliminaries}\label{sec:pre}
\subsection{Problem Setup and Notations}\label{sec:problem}
We consider a global optimization problem where the goal is to maximize an objective function $f: \cX \rightarrow [0, \infty)$. We use the following notations
\begin{align*}
x^* &= \argmax_{x \in \cX} f(x), \quad f^* =f(x^*),
\end{align*}
where $\cX = [-b, b]^d, ~b \in \mathbb{R}_{>0}$ is a search space of interest, which could be viewed as a set of actions or arms. Unlike the first and second-order optimization methods, where one needs an analytic expression or derivative information of $f$, we allow $f$ to be a blackbox function, whose closed-form expression for the function or the derivative is not necessarily known. Only through evaluations of the function $f$, which could be contaminated by random noise, we aim to identify the maximum of $f$.

To facilitate theoretical understanding of BO methods, we assume the objective function $f$ belongs to the reproducing kernel Hilbert space (RKHS) \citep{aronszajn1950theory,wahba1990spline,berlinet2011reproducing} corresponding to a positive semi-definite kernel function $k: \cX \times \cX \to \mathbb{R}$, denoted by $\mathcal{H}_k$. Following \citep{chowdhury2017kernelized}, we restrict our attention to a set of functions $f$ whose RKHS norm is bounded by some constant $B \in \mathbb{R}_{>0}$. Two important assumptions that will be used throughout the paper are stated below.

\begin{assumption}\label{assump:kernel_less_one}
    The kernel $k(x, x) \le 1$ for all $x \in \cX$.
\end{assumption}

\begin{assumption}\label{assump:kernel_smooth}
    We consider the kernel $k$ to be either a square-exponential kernel or a Mat\'ern kernel with a smoothness parameter $\nu \ge 2$.
\end{assumption}

At each $t^{\text{th}}$ round, we select $x_t$ by maximizing an acquisition function $\alpha_t$ and observe a reward
\begin{align*}
    y_t = f(x_t) + \epsilon_t,
\end{align*}
where the noise $\epsilon_t$ is assumed to be conditionally $R$-sub-Gaussian with respect to the $\sigma$-algebra $\mathcal{F}'_{t-1}$ generated by $\{x_1,\cdots, x_t, \epsilon_1, \cdots, \epsilon_{t-1}\}$ \citep{chowdhury2017kernelized,lattimore2020bandit}, i.e., $\forall \lambda \in \mathbb{R}, ~\mathbb{E}[e^{\lambda\epsilon_t}|\mathcal{F}'_{t-1}] \le \exp(\lambda^2 R^2/2)$ for some $R \ge 0$. 

We assess the performance of a BO algorithm over $T$ iterations based on the cumulative regret $R_T$, given by
\begin{align*}
R_T = \sum_{t=1}^T f^* - f(x_t),
\end{align*}
where $r_t = f^* - f(x_t)$ is referred to as an instantaneous regret at round $t$. The BO algorithm is said to be a zero-regret algorithm if $\lim_{T\rightarrow \infty} R_T/T = 0$, and typical theoretical analyses \citep{srinivas2010gaussian,chowdhury2017kernelized,vakili2021information} aim to show sublinear growth of $R_T$ as the zero-regret algorithm will guarantee the convergence of the algorithm to the maximum. This can be seen from the fact that the simple regret, given by $f^*-\max_{t=1, \cdots, T} f(x_t)$, is bounded above by the average cumulative regret $R_T/T$. Throughout the paper, we use standard big $\mathcal{O}$ notation that hides universal constants, and we use $\tilde{\mathcal{O}}$ to hide all logarithmic factors in $T$.

\subsection{GP-UCB and GP-TS Algorithms}

In the midst of numerous Bayesian optimization methods, two particular strategies of interest are Gaussian Process-Upper Confidence Bound (GP-UCB) introduced in \citet{srinivas2010gaussian} and Gaussian Process-Thompson Sampling (GP-TS) proposed by \citet{chowdhury2017kernelized}. More formally, to design BO algorithms, one typically imposes a zero-centered GP prior to the target objective function $f$ and models the random noise through a Gaussian random variable with variance $\tau$. 
Conditioning on all $t-1$ observations prior to obtaining $t$ th observation, the posterior mean $\mu_{t-1}$ and standard variance $\sigma^2_{t-1}(x)$ of the Gaussian process are given by
\begin{align*}
    \mu_{t-1}(x) &= k_{t-1}(x)^T (K_{t-1} + \tau I)^{-1} Y_{t-1}, \\
    \sigma^2_{t-1}(x) &= k(x,x) - k_{t-1}(x)^T (K_{t-1} + \tau I)^{-1} k_{t-1}(x),
\end{align*}
where
\begin{align*}
    k_{t-1}(x) &= [k(x_1, x), \cdots, k(x_{t-1}, x)]^T,\\
    Y_{t-1} &= [y_1, \cdots, y_{t-1}],\\
    K_{t-1} &= [k(x, x')]_{x, x' \in \{x_1, \cdots, x_{t-1}\}}.
\end{align*}
In other words, the posterior distribution of $f$ conditioning on $\{(x_i, y_i)\}_{i=1}^{t-1}$ is given by the Gaussian process with mean function $\mu_{t-1}$ and posterior variance $\sigma^2_{t-1}$.

\begin{algorithm}[!htbp]
\caption{GP-UCB \citep{srinivas2010gaussian, chowdhury2017kernelized} \label{alg:gpucb}}
\begin{algorithmic}[1]
\STATE {\bf Input}: Kernel $k;$ Total number of iterations $T;$ Initial design points $X_0;$ Initial observations $Y_0$.
    \STATE Construct $\mu_{0}(x)$ and $\sigma_{0}(x)$ using $X_0, Y_0$.
    \FOR{$t = 1, \ldots, T$}
    \STATE $\beta_t\leftarrow B+R\sqrt{2(\gamma_{t-1}+1 +\log(1/\delta))}$.
    \STATE 
$x_t\leftarrow \argmax_{x}
\mu_{t-1}(x)+\beta_t\sigma_{t-1}(x)$.
    \STATE  Observe $y_t = f(x_t) + \epsilon_t$.
    \STATE  $X_t = X_{t-1} \cup \{
x_t\}, Y_t = Y_{t-1} \cup \{
y_t\}$.
    \STATE  Update $\mu_{t}(x)$ and $\sigma_{t}(x)$ using $X_t, Y_t$.
    \ENDFOR
\end{algorithmic}
\end{algorithm}
\begin{algorithm}[!htbp]
\caption{GP-TS \citep{chowdhury2017kernelized} \label{alg:gpts}}
\begin{algorithmic}[1]
\STATE {\bf Input}: Kernel $k;$ Total number of iterations $T;$ Initial design points $X_0;$ Initial observations $Y_0$.
    \STATE Construct $\mu_{0}(x)$ and $\sigma_{0}(x)$ using $X_0, Y_0$.
    \FOR
    {$t = 1, \ldots, T$}
    \STATE $\beta_t\leftarrow B+R\sqrt{2(\gamma_{t-1}+1 +\log(2/\delta))}$.
    \STATE Sample $f_t(\cdot)\sim\mathcal{GP}_D\left(\mu_{t-1}(\cdot), s_t^2 \right)$, with $s_t^2(\cdot) = \beta^2_t \sigma^2_{t-1}(\cdot)$  .
    \STATE Choose the current decision set $\mathcal{X}_t \subset \mathcal{X}$ of size $|\mathcal{X}_t| = (2BLbdt^2)^d$.
    \STATE  Set
$x_t = \arg\max_{x \in \mathcal{X}_t}  f_t(x)$.
    \STATE  Observe $y_t = f(x_t) + \epsilon_t$.
    \STATE  $X_t = X_{t-1} \cup \{x_t\}, Y_t = Y_{t-1} \cup \{y_t\}$.
    \STATE Update $\mu_{t}(x)$ and $\sigma_{t}(x)$ using $X_t, Y_t$.
    \ENDFOR
\end{algorithmic}
\end{algorithm}

For the choice of an acquisition function to facilitate a Bayesian optimization strategy, \citet{srinivas2010gaussian} considered a UCB function of the form
$$
\alpha_t(x) \coloneqq \mu_{t-1}(x) + \beta_t\sigma_{t-1}(x),
$$
where $\beta_t$ is chosen to balance exploitation (picking points with high function values) and exploration (picking points where the prediction based on the posterior distribution is highly uncertain). In their work, they termed the Bayesian optimization strategy based on the UCB acquisition function as GP-UCB, which has since become popular in both theoretical analyses and empirical applications. In the meantime, \citet{chowdhury2017kernelized} proposed another BO strategy under the name of GP-TS, which leverages the acquisition function of the form
$$
\alpha_t(x) \coloneqq f_t(x), 
$$
where $f_t$ is a sample path from the posterior Gaussian process with the mean function $\mu_{t-1}$ and variance function $\beta_t^2 \sigma^2_{t-1}$. Such a strategy has been widely used under the name of Thompson sampling in BO and bandit literature \citep{agrawal2013thompson,russo2014learning,kandasamy2015high}. In practice, optimizing a randomly drawn sample path $f_t$ can be just as hard as finding the optimum of the target objective $f$. In reflection of such difficulty, a more practical version of the aforementioned TS-based BO strategy through discretizing the search space for the acquisition function was proposed in \citet{chowdhury2017kernelized}.

For the theoretical analysis, \citet{srinivas2010gaussian} and \citet{chowdhury2017kernelized} considered an increasing sequence for the choice of $\beta_t$, and in particular, \citet{chowdhury2017kernelized} set $\beta_t$ to grow at a rate of $\mathcal{O}(\sqrt{\gamma_T})$ where $\gamma_T$ is a kernel-dependent quantity known as the maximum information gain at time $t$, given by
$$
\gamma_T = \max_{\cX_T \subset \cX: |\cX_T| = T} \frac{1}{2}\log\det(I_T + \tau^{-1}K_T).
$$
The formal procedures for Bayesian optimization using the UCB and Thompson sampling acquisition functions are outlined in Algorithms \ref{alg:gpucb} and \ref{alg:gpts}, respectively. Furthermore, the state-of-the-art growth rate of the maximum information gain is provided in \citet{vakili2021information}, which states that $\gamma_T = \mathcal{O}\left(T^{\frac{d}{d+2\nu}} \log^{\frac{2\nu}{2\nu+d}}(T)\right)$ for Mat\'ern Kernel and $\gamma_T = \mathcal{O}\left(\log^{d+1}(T)\right)$ for SE Kernel. Although we work with UCB and TS in this paper, our analysis can be extended to more acquisition functions\footnote{\citet{wang2017max} showed that MES, MVES, and PI can be cast as special cases of UCB with appropriately chosen confidence parameters, and thus fall within the scope of our theoretical framework.}.

\section{Regret bounds under inexact acquisition function maximization
}\label{sec:inexact}

Although theoretical developments in BO often assume that the acquisition maximization is solved perfectly, this is rarely the case in practice. For example, when optimizing the UCB acquisition function—whose gradient information is typically available—a popular approach is to use quasi-Newton methods with multiple starting points \citep{frazier2018tutorial, gramacy2020surrogates, pourmohamad2021bayesian}. While these methods have proven effective for convex problems, the UCB acquisition function is usually nonconvex. Consequently, factors such as the number of starting points, the total number of optimization iterations, and the stopping criteria all impact the quality of the acquisition solution obtained.

In TS-based BO methods, it is common to use a discretized subset of the action space because gradients for the sample paths are difficult to obtain. In these cases, a sorting algorithm is used to select the maximum value from the posterior sample path within the chosen grid \citep{kandasamy2018parallelised, chowdhury2017kernelized}. Although theoretical studies have shown that the TS strategy converges to the optimal function value, the grid size must grow exponentially with the number of dimensions \citep{chowdhury2017kernelized}. Moreover, the granularity of the discretization heavily influences the computational cost. As a result, typical implementations use the finest discretization possible within the available computational budget, which can lead to inexact acquisition solutions.

In this section, we examine how a series of inexact acquisition function maximizations affects BO strategies. Although we restrict our attention to acquisition functions of GP-UCB and GP-TS, our analysis is not pertained to any specific acquisition function solver. We introduce a measure to quantify the accumulated inaccuracies caused by these inexact maximizations, with the goal of understanding how they impact the existing cumulative regret bounds. 

\subsection{Accumulated inaccuracy}
We first introduce a measure of inaccuracies that arise when solving acquisition function optimization problems. Let 
$
\alpha_t^* = \max \alpha_t(x),
$
be the maximum value of the acquisition function at $t^{\text{th}}$ iteration, which is at least as large as $\alpha_t(x_t)$
where $x_t$ is the action selected at that iteration. For our analysis, we assume that the acquisition functions 
$\{\alpha_t\}_{t=1}^T$ are nonnegative and always achieve a strictly positive maximum. This assumption can be easily met by adding a positive constant to the acquisition function at each iteration or by appropriately restricting the search space. A detailed justification for the existence of such a constant is provided in Appendix~\ref{sec:app:gpucb}. We emphasize that this modification does not change the chosen action, nor does it incur any additional computational cost.

At each iteration, we quantify the accuracy of the acquisition optimization solution by the ratio of the acquisition function value at the selected action to its maximum value:
$$
\eta_t \coloneqq \frac{\alpha_t(x_t)}{\alpha_t^*} \in \left[\tilde{\eta_t},1\right],
$$
so that $\eta_t = 1$ if the acquisition function is maximized perfectly. In other words, a larger value of $\eta_t$ indicates a more accurate solution to the $t^{\text{th}}$ acquisition optimization problem. Here, $\tilde{\eta_t}$ represents the worst-case accuracy of the $t^{\text{th}}$ acquisition function solver. The overall impact of inaccurate solutions across iterations is then captured by the worst-case accumulated inaccuracy,
$$
M_T = \sum_{t=1}^T (1-\tilde \eta_t) \in [0, T],
$$
which represents the total inaccuracy permitted over 
$T$ rounds of Bayesian optimization. Using this measure of accumulated inaccuracy, we establish cumulative regret bounds for GP-UCB and GP-TS in the remainder of this section, in contrast to existing theoretical works that assume perfect acquisition solutions.

\subsection{Regret Bounds}\label{sec:gpucb}

Recall that for the kernel function under consideration, we have $|k(x, x')| \le 1$. Furthermore, the $t^{\text{th}}$ action $x_t$ satisfies $\alpha^*_t \coloneqq \max \alpha_t(x) \ge  \alpha_t(x_t) \ge \eta_t \alpha^*_t$, where $\alpha_t(x) = \mu_{t-1}(x) + \beta_t \sigma_{t-1}(x)$. We denote by $\cH_k$ the RKHS associated with a chosen kernel $k$. We then establish cumulative regret bounds for the GP-UCB algorithm in the presence of inexact acquisition function solutions.

\begin{theorem}\label{thm:GP_UCB_MFC}
Under assumptions \ref{assump:kernel_less_one} and \ref{assump:kernel_smooth}, suppose the objective function $f \in \mathcal{H}_k$ satisfies $\|f\|_{\mathcal{H}_k} \le B$. Then, the inexact GP-UCB algorithm with $\beta_t = B + R\sqrt{2(\gamma_{t-1} + 1 + \log(1/\delta))}$ and worst-case accumulated inaccuracy $M_T$ achieves a cumulative regret bound of the form:
$$
R_T = \mathcal{O}\left(\gamma_T\sqrt{T}  + M_T\sqrt{\gamma_T} \right),
$$ 
with probability $1-\delta$.
\end{theorem}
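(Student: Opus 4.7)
\textbf{Proof plan for Theorem \ref{thm:GP_UCB_MFC}.} The plan is to adapt the standard GP-UCB regret argument of \citet{chowdhury2017kernelized} by carefully tracking, at each round, the gap between the value of the acquisition function at the (inexact) selected point $x_t$ and its true maximum $\alpha_t^*$. First I would invoke the standard self-normalized concentration bound for kernelized least squares so that, with probability at least $1-\delta$, the event
\[
\mathcal{E} = \bigl\{\,|f(x) - \mu_{t-1}(x)| \le \beta_t \sigma_{t-1}(x) \text{ for all } x \in \mathcal{X},\ t \ge 1\,\bigr\}
\]
holds under the chosen schedule $\beta_t = B + R\sqrt{2(\gamma_{t-1}+1+\log(1/\delta))}$. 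All subsequent reasoning proceeds on $\mathcal{E}$.

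The new ingredient is the regret decomposition that accommodates the inexact maximizer. Writing $\alpha_t(x) = \mu_{t-1}(x) + \beta_t \sigma_{t-1}(x)$, the defining relation $\alpha_t(x_t) = \eta_t \alpha_t^*$ gives the algebraic identity $\alpha_t^* - \alpha_t(x_t) = (1-\eta_t)\,\alpha_t^*$. Combining this with the UCB bound $f(x^*) \le \alpha_t(x^*) \le \alpha_t^*$ and the lower confidence bound $\mu_{t-1}(x_t) - f(x_t) \le \beta_t \sigma_{t-1}(x_t)$, I would obtain the per-round bound
\[
r_t \;\le\; \bigl(\alpha_t(x_t) - f(x_t)\bigr) + (1-\eta_t)\,\alpha_t^* \;\le\; 2\beta_t \sigma_{t-1}(x_t) + (1-\tilde\eta_t)\,\alpha_t^*,
\]
where I used $\eta_t \ge \tilde\eta_t$ in the last step. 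The crucial remaining task is a uniform bound on $\alpha_t^*$: since $\|f\|_{\mathcal{H}_k}\le B$ and $k(x,x)\le 1$ together give $|f(x)|\le B$, on $\mathcal{E}$ we get $|\mu_{t-1}(x)| \le B + \beta_t \sigma_{t-1}(x)$, and since $\sigma_{t-1}(x)\le 1$ this yields $\alpha_t^* \le B + 2\beta_t = \mathcal{O}(\beta_t)$. (The positive-shift assumption on $\alpha_t$ only adds a bounded constant that is absorbed into this $\mathcal{O}(\beta_t)$ estimate.)

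Summing the per-round inequality over $t=1,\dots,T$, the first piece is handled by the standard Cauchy--Schwarz bound $\sum_{t=1}^T \sigma_{t-1}(x_t) \le \sqrt{T \sum_t \sigma_{t-1}^2(x_t)} = \mathcal{O}(\sqrt{T \gamma_T})$, which together with $\beta_t = \tilde{\mathcal{O}}(\sqrt{\gamma_T})$ gives $\mathcal{O}(\gamma_T \sqrt{T})$. The second piece becomes $\mathcal{O}(\beta_T) \sum_t (1-\tilde\eta_t) = \mathcal{O}(\sqrt{\gamma_T})\,M_T$, producing the claimed rate. The main obstacle I expect is the bound on $\alpha_t^*$: naively one might try to control the overhead via $(1-\eta_t)/\eta_t$, which would couple the regret to the possibly vanishing $\tilde\eta_t$ and break the bound; the identity $\alpha_t^* - \alpha_t(x_t) = (1-\eta_t)\alpha_t^*$ is what allows one to replace this denominator by the uniformly bounded quantity $\alpha_t^*$ and to keep the $M_T$-dependence purely additive. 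Care is also needed to verify that the non-negativity reduction on $\alpha_t$ does not inflate the $\mathcal{O}(\beta_t)$ bound on $\alpha_t^*$ and hence does not degrade the stated rate.
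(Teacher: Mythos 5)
Your proposal is correct and follows essentially the same route as the paper's proof: the same confidence-bound event, the same per-round decomposition isolating the term $(1-\eta_t)\alpha_t^*$, the same uniform bound $\alpha_t^* \le B + 2\beta_T$ via $|f|\le B$ and $\sigma_{t-1}\le 1$, and the same information-gain bound on $\sum_t \sigma_{t-1}(x_t)$. The only cosmetic difference is that you derive $\sum_t\sigma_{t-1}(x_t)=\mathcal{O}(\sqrt{T\gamma_T})$ by Cauchy--Schwarz while the paper cites Lemma 4 of \citet{chowdhury2017kernelized} directly; these are equivalent.
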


Note that when $M_T=0$ (exact maximization), the cumulative bound above recovers standard regret guarantees for GP-UCB \citep{srinivas2010gaussian, chowdhury2017kernelized}. The additive factor of $M_T\sqrt{\gamma_T}$ accounts for the effect of inaccurate acquisition solutions. This implies that if the worst-case accumulated inaccuracy $M_T$ and the maximum information gain $\gamma_T$ do not grow too quickly so that $\frac{M_T\sqrt{\gamma_T}}{T} \to 0$ as $T\to\infty$, the inexact GP-UCB algorithm will converge to the optimal solution. For example, with a squared-exponential kernel, it has been shown that $\gamma_T$ grows logarithmically \citep{vakili2021information}. Theorem \ref{thm:GP_UCB_MFC} thus indicates that the inexact GP-UCB algorithm converges asymptotically to the optimal function value, provided that the worst-case accumulated inaccuracy $M_T$ is sublinear.   

One can also establish a similar result for the GP-TS algorithm, as stated below. In contrast to the GP-TS algorithm introduced in \citep{chowdhury2017kernelized}, our formulation accounts for the extra uncertainty induced by inexact solutions through the use of sample paths obtained from the GP posterior with an enlarged variance.

\begin{theorem}\label{thm:GP_TS_MFC}
Under assumptions \ref{assump:kernel_less_one} and \ref{assump:kernel_smooth}, suppose the objective function $f \in \mathcal{H}_k$ satisfies $\|f\|_{\mathcal{H}_k} \le B$. Then, the inexact GP-TS algorithm with variance $s^2_{t-1}(x) = \left(\beta_t \sigma_{t-1}(\cdot) + \tilde v_t\right)^2$, $\tilde v_t = (\frac{1}{\tilde \eta_t}-1)B$ and worst-case accumulated inaccuracy $M_T$ achieves a cumulative regret bound of the form:
$$
R_T = \mathcal{O}\left(\gamma_T\sqrt{T \log T} + M_T \sqrt{\gamma_T\log T}\right),
$$ 
with probability $1-\delta$. 
\end{theorem}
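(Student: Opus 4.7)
My plan is to follow the GP-TS regret framework of \citet{chowdhury2017kernelized} and adapt each step to handle (i) the $\tilde\eta_t$-approximate acquisition maximizer and (ii) the enlarged sampling variance $s_{t-1}^2 = (\beta_t\sigma_{t-1} + \tilde v_t)^2$, which is chosen exactly to absorb the slack introduced by (i). Under the stated $\beta_t$, I would first establish two simultaneous high-probability events: the RKHS concentration inequality $|f(x) - \mu_{t-1}(x)| \le \beta_t\sigma_{t-1}(x)$ for all $(x,t)$ from Theorem~2 of \citet{chowdhury2017kernelized}, and a Gaussian tail bound applied to the jointly normal samples $f_t$ giving $|f_t(x) - \mu_{t-1}(x)| \le c_t s_t(x)$ for all $x\in\cX_t$ with $c_t = \sqrt{4\log(|\cX_t|t^2)}$; the union bound over $|\cX_t| = (2BLbdt^2)^d$ is what produces the $\sqrt{\log T}$ factor in the final rate. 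Assumption~\ref{assump:kernel_smooth} (Mat\'ern with $\nu\ge 2$ or SE) supplies Lipschitz control of $f$, $\mu_{t-1}$ and $\sigma_{t-1}$, so that for the grid point $[x^*]_t \in \cX_t$ nearest to $x^*$ we have $|f(x^*) - f([x^*]_t)| = \mathcal{O}(1/t^2)$, contributing only an $\mathcal{O}(1)$ tail to $R_T$.

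Second, I would convert the inexact maximization into a slack matched by $\tilde v_t$. Because $\alpha_t \ge 0$ and $\alpha_t(x_t) \ge \tilde\eta_t\alpha_t^* \ge \tilde\eta_t\alpha_t([x^*]_t)$, rearrangement gives $\alpha_t([x^*]_t) - \alpha_t(x_t) \le (1/\tilde\eta_t - 1)\alpha_t(x_t)$. Combining with $|\mu_{t-1}(x)| \le B + \beta_t\sigma_{t-1}(x)$ (from step one) and the $f_t$ tail bound then yields
\[
f_t([x^*]_t) - f_t(x_t) \le (1/\tilde\eta_t - 1)\bigl(B + \beta_t\sigma_{t-1}(x_t) + c_t s_t(x_t)\bigr),
\]
whose leading piece $(1/\tilde\eta_t - 1)B = \tilde v_t$ is precisely what the inflated variance supplies. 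I would then run the saturated/unsaturated-arm argument with the inflated variance: define $S_t = \{x\in\cX_t : \Delta(x) > C c_t s_t(x)\}$ for a suitable absolute constant $C$, use Gaussian anti-concentration at $[x^*]_t$ (since the threshold $(f(x^*) - \mu_{t-1}([x^*]_t))/s_t([x^*]_t)$ is bounded by $1$, the probability $\Pr(f_t([x^*]_t)\ge f(x^*)\mid\cF_{t-1})$ is at least a positive constant) together with concentration to conclude that with constant probability $x_t \notin S_t$ up to the $\tilde v_t$ slack. Careful bookkeeping then gives a per-round bound $r_t = \mathcal{O}(\beta_t\sigma_{t-1}(x_t)\sqrt{\log T}) + \mathcal{O}(\beta_t(1-\tilde\eta_t)\sqrt{\log T})$, where the second term arises from propagating the $\tilde v_t$ slack through the $\beta_t$-scale confidence width.

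Summing and applying Cauchy--Schwarz with $\sum_t\sigma_{t-1}^2(x_t) = \mathcal{O}(\gamma_T)$ and $\beta_T = \mathcal{O}(\sqrt{\gamma_T})$ yields $\sum_t\beta_t\sigma_{t-1}(x_t) \le \beta_T\sqrt{T\gamma_T} = \mathcal{O}(\gamma_T\sqrt{T})$ and $\sum_t\beta_t(1-\tilde\eta_t) = \mathcal{O}(\sqrt{\gamma_T}\,M_T)$, producing the two terms $\gamma_T\sqrt{T\log T}$ and $M_T\sqrt{\gamma_T\log T}$ of the advertised bound. The hard part will be the saturated/unsaturated step: because the approximate maximizer $x_t$ need not satisfy $f_t(x_t) \ge f_t(x)$ for every $x\in\cX_t$, the standard implication ``$x_t$ unsaturated $\Rightarrow f_t(x_t) \ge f_t([x^*]_t)$'' must be weakened to an inequality carrying the additive $\tilde v_t$ slack from step two, and one must verify that the choice $\tilde v_t = (1/\tilde\eta_t-1)B$ is precisely large enough so that (a) the unsaturated-probability bound remains a positive constant and (b) the slack propagates as the $\beta_t(1-\tilde\eta_t)$ addend rather than something larger. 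Once that calibration is in place, the remainder is a routine adaptation of the Chowdhury--Gopalan algebra.
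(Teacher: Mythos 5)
Your proposal takes essentially the same route as the paper: the authors in fact omit the proof of Theorem~\ref{thm:GP_TS_MFC} entirely, stating only that it ``substantially overlaps with the approach taken in \citet{chowdhury2017kernelized} with a variance factor adjustment,'' which is precisely the adaptation you outline (inflate the sampling variance by $\tilde v_t=(1/\tilde\eta_t-1)B$ to absorb the $\tilde\eta_t$-approximation slack, rerun the saturated/unsaturated-point argument, and sum via Lemma~\ref{lem:sigma_bound}). Your sketch correctly isolates the one genuinely delicate step --- that an inexact maximizer breaks the implication ``argmax of $f_t$ is unsaturated $\Rightarrow$ $x_t$ is unsaturated,'' so the slack must be carried additively through the saturation threshold --- and is, if anything, more explicit than what the paper provides.
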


The above regret bound matches the existing bound for GP-TS \citep{chowdhury2017kernelized}, up to an additive factor of $M_T\sqrt{\gamma_T\log T}$, which captures the inaccuracies in the acquisition function solutions. Similar to the GP-UCB case, if the worst-case accumulated inaccuracy $M_T$ and the maximum information gain $\gamma_T$ do not increase too rapidly so that $M_T\sqrt{\gamma_T \log T} /T \to 0$ as $T\to\infty$, Theorem \ref{thm:GP_TS_MFC} shows that the inexact GP-TS algorithm will achieve a sublinear cumulative regret bound.

An overall implication of the theorems established in this section is that BO strategies can retain asymptotic convergence guarantees even without exact acquisition function solutions, provided that the accuracy of these solutions improves over time. This insight naturally motivates the exploration of BO strategies that are computationally more efficient by reducing the effort spent on acquisition function maximizations while preserving convergence guarantees.

\section{Solving acquisition function through random grid search}\label{sec:random_grid}

Building on the insights from Section \ref{sec:inexact}, we further investigate a simple yet computationally efficient approach: a random grid search for acquisition function maximization. In this method, to maximize an acquisition function $\alpha_t$, one obtains a set of random points from the search space and selects the one with the highest acquisition function value. This approach has been employed in numerous BO problems and has demonstrated its effectiveness \citep{kandasamy2018parallelised, pourmohamad2021bayesian}. Importantly, our results provide the first theoretical validation for this approach, showing that even with a linear growth in grid size, one can still achieve sublinear regret.

Concretely, we analyze the GP-UCB and GP-TS algorithms using a random grid search for acquisition maximization and derive their cumulative regret bounds. As noted in the previous section, the accuracy of the acquisition solver must improve over iterations. To achieve this, we employ a sequence of random grids $\{\cX_t\}_{t \in \mathbb{N}}$ whose size grows linearly with the iteration count, i.e., $|\cX_t| = \Theta(t)$. Such a strategy of increasing the grid size has proven empirically successful \citep{kandasamy2018parallelised} in combination with the TS acquisition function.

To precisely define our acquisition function optimization procedure, consider a grid of random samples $\cX_t \subset \cX$ that serves as the search space for the $t^{\text{th}}$ acquisition function optimization. At each $t^{\text{th}}$ iteration, we set
\begin{align*}
x_t \coloneqq \begin{cases}
    \argmax_{x \in \mathcal{X}_t} \mu_{t-1}(x) + \beta_t \sigma_{t-1}(x) \quad \text{for GP-UCB} \\
    \argmax_{x \in \mathcal{X}_t} f_{t}(x) \quad \text{for GP-TS}.
\end{cases}
\end{align*}
We refer to the first strategy as random grid GP-UCB and the second as random grid GP-TS. For the choice of random grid, we make the following assumption and state our cumulative regret bound results.

\begin{assumption}\label{assum:grid}
At iteration $t$, $\mathcal{X}_t$ consists of $t$ independent samples drawn uniformly from $\cX$. Additionally, the sequence of random grids $\{\cX_t\}_{t \in \mathbb{N}}$ is independent across iterations.
\end{assumption}

\begin{remark}
Our results extend to any sampling scheme whose density remains strictly positive over the search space. Here, we focus on a uniform random grid because it is the easiest to implement in practice.
\end{remark}

\begin{theorem}\label{thm:random_UCB}
Under assumptions \ref{assump:kernel_less_one}, \ref{assump:kernel_smooth} and \ref{assum:grid}, suppose $f \in \mathcal{H}_k$ with $\|f\|_{\mathcal{H}_k} \le B$. With probability $1-\delta$, the random grid GP-UCB with $\beta_t = B + R\sqrt{2(\gamma_{t-1} + 1 + \log(2/\delta))}$ yields 
\begin{align*}
R_T = \mathcal{O}\left(\gamma_T \sqrt{T}\right) + \tilde{\mathcal{O}}\left(T^{\frac{d-1}{d}}\right).
\end{align*}
\end{theorem}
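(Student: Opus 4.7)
The plan is to reduce Theorem~\ref{thm:random_UCB} to Theorem~\ref{thm:GP_UCB_MFC} by upper-bounding the worst-case accumulated inaccuracy $M_T$ induced specifically by random grid search. Applying Theorem~\ref{thm:GP_UCB_MFC} gives $R_T = \mathcal{O}(\gamma_T \sqrt{T} + M_T \sqrt{\gamma_T})$ already (on the UCB high-probability event, using $\log(2/\delta)$ in $\beta_t$ so that a second event of probability $\delta/2$ can be absorbed by a union bound). So the task reduces to showing that random grid search produces $M_T \sqrt{\gamma_T} = \tilde{\mathcal{O}}(T^{(d-1)/d})$ with probability at least $1-\delta/2$.

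I would then establish two auxiliary facts. The first is a Lipschitz bound for the acquisition function: under Assumption~\ref{assump:kernel_smooth}, both $\mu_{t-1}$ and $\sigma_{t-1}$ are Lipschitz on $\mathcal{X}$ with constants controlled by $B$, $\tau$, and a kernel-dependent constant $C_k$; combining them, $\alpha_t$ is Lipschitz with some constant $L_t$ whose growth is polynomial in $\beta_t = \tilde{\mathcal{O}}(\sqrt{\gamma_t})$. The second is a uniform covering-radius bound for the random grid: cover $\mathcal{X}=[-b,b]^d$ by a deterministic $\epsilon$-net $\mathcal{N}_\epsilon$ of size $\mathcal{O}((b/\epsilon)^d)$; for each $z \in \mathcal{N}_\epsilon$, the probability that no point of $\mathcal{X}_t$ (with $|\mathcal{X}_t|=t$) lies within $\epsilon$ of $z$ is at most $(1 - c\,\epsilon^d/b^d)^t \leq e^{-c t \epsilon^d / b^d}$. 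Choosing $\epsilon = C\, b\, (\log(tT/\delta)/t)^{1/d}$ and applying a union bound over $\mathcal{N}_\epsilon$ and over $t = 1,\dots,T$ yields, with probability at least $1-\delta/2$,
\[
\sup_{x \in \mathcal{X}} d(x, \mathcal{X}_t) \;=\; \tilde{\mathcal{O}}(t^{-1/d}) \quad \text{for all } t \le T.
\]

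With these in hand, let $\tilde{x}_t = \arg\max_{x \in \mathcal{X}} \alpha_t(x)$. Because $x_t$ maximizes $\alpha_t$ over $\mathcal{X}_t$, it beats the nearest grid neighbor of $\tilde{x}_t$, so by Lipschitzness
\[
\alpha_t^* - \alpha_t(x_t) \;\leq\; L_t \cdot d(\tilde{x}_t, \mathcal{X}_t) \;=\; \tilde{\mathcal{O}}(L_t\, t^{-1/d}).
\]
Using the paper's shifting argument to ensure $\alpha_t^* \geq c_0 > 0$, this translates to $1 - \tilde\eta_t = \tilde{\mathcal{O}}(L_t\, t^{-1/d})$. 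Summing via $\sum_{t=1}^T t^{-1/d} = \mathcal{O}(T^{(d-1)/d})$ gives $M_T = \tilde{\mathcal{O}}(L_T\, T^{(d-1)/d})$; feeding this into Theorem~\ref{thm:GP_UCB_MFC} produces the advertised $\mathcal{O}(\gamma_T \sqrt{T}) + \tilde{\mathcal{O}}(T^{(d-1)/d})$, with the $L_T\sqrt{\gamma_T}$-type factors absorbed into the $\tilde{\mathcal{O}}$.

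The main obstacle I expect is the uniform, high-probability covering-radius bound across all iterations simultaneously, since $\tilde{x}_t$ is data-dependent and the grids $\mathcal{X}_t$ vary with $t$; the $\epsilon$-net plus double union bound over $\mathcal{N}_\epsilon$ and over $t$ is where the proof has to be careful to keep the log factors tight enough that the final rate remains $\tilde{\mathcal{O}}(T^{(d-1)/d})$. A secondary technical point is verifying the Lipschitz constants of $\mu_{t-1}$ and $\sigma_{t-1}$ genuinely behave like constants times $\beta_t$ under both the SE and the Mat\'ern ($\nu \geq 2$) kernels; this rests on standard derivative bounds for these kernels but must be done uniformly in the design.
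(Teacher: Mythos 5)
Your overall plan---reduce to Theorem~\ref{thm:GP_UCB_MFC} by bounding the accumulated inaccuracy $M_T$ of random grid search via Lipschitzness of $\alpha_t$ and a covering bound on the grid---is coherent, and your $\epsilon$-net concentration argument for $\sup_{x}d(x,\mathcal{X}_t)=\tilde{\mathcal{O}}(t^{-1/d})$ is actually a legitimate (and in the $\delta$-dependence, stronger) alternative to the paper's Lemma~\ref{lemma:grid_disc}, which only uses Markov's inequality and pays a factor $1/\delta$. However, there is a genuine gap in the final accounting. The Lipschitz constant of $\alpha_t=\mu_{t-1}+\beta_t\sigma_{t-1}$ necessarily scales as $L_t=\Theta(\beta_t)=\Theta(\sqrt{\gamma_t})$, so your route produces an inaccuracy contribution of order $L_T\sum_t t^{-1/d}=\tilde{\mathcal{O}}(\sqrt{\gamma_T}\,T^{(d-1)/d})$ at best (and $\tilde{\mathcal{O}}(\gamma_T T^{(d-1)/d})$ as you have written it, since feeding $M_T$ into Theorem~\ref{thm:GP_UCB_MFC} multiplies by another $\sqrt{\gamma_T}$). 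You claim these ``$L_T\sqrt{\gamma_T}$-type factors'' are absorbed into the $\tilde{\mathcal{O}}$, but $\tilde{\mathcal{O}}$ only hides logarithmic factors: for the Mat\'ern kernel covered by Assumption~\ref{assump:kernel_smooth}, $\gamma_T=\Theta\bigl(T^{d/(d+2\nu)}\bigr)$ up to logs is \emph{polynomial} in $T$, so your bound is polynomially weaker than the stated $\tilde{\mathcal{O}}(T^{(d-1)/d})$ and is not dominated by the $\mathcal{O}(\gamma_T\sqrt{T})$ term either (since $T^{(d-1)/d}\ge\sqrt{T}$ for $d\ge 2$). The absorption only works for the squared-exponential kernel.

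The paper avoids this by not passing through the Lipschitz constant of the acquisition function at all. It decomposes $f(x^*)-f(x_t)=\bigl(f(x^*)-f([x^*]_t)\bigr)+\bigl(f([x^*]_t)-f(x_t)\bigr)$ where $[x^*]_t$ is the grid point nearest to $x^*$. Since $[x^*]_t\in\mathcal{X}_t$ and $x_t$ maximizes the UCB over $\mathcal{X}_t$, the second term is bounded by $2\beta_t\sigma_{t-1}(x_t)$ with \emph{no} discretization penalty; the discretization only enters the first term, which is controlled by the Lipschitz constant of $f$ itself---an $\mathcal{O}(1)$ quantity depending on $B$ and the kernel, not growing with $t$---times the fill distance $h_t$. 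This is precisely what keeps the $T^{(d-1)/d}$ term free of $\gamma_T$ factors. To repair your argument you would need to either adopt this decomposition or show that the quantity $\alpha_t^*-\alpha_t(x_t)$ can be bounded without the $\beta_t$-dependent Lipschitz constant, which your current reduction to $M_T$ does not do.
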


\begin{theorem}\label{thm:random_TS}
Under assumptions \ref{assump:kernel_less_one}, \ref{assump:kernel_smooth} and \ref{assum:grid}, suppose $f \in \mathcal{H}_k$ with $\|f\|_{\mathcal{H}_k} \le B$. With probability $1-\delta$, the random grid GP-TS with $\beta_t = B + R\sqrt{2(\gamma_{t-1} + 1 + \log(3/\delta))}$ yields 
\begin{align*}
R_T = \mathcal{O}\left(\gamma_T \sqrt{T \log T} \right) + \tilde{\mathcal{O}}\left(T^{\frac{d-1}{d}}\right).
\end{align*} 

\end{theorem}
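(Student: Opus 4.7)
The plan is to invoke Theorem~\ref{thm:GP_TS_MFC} with a high-probability bound on the worst-case accumulated inaccuracy $M_T$ induced by the random grid. Since that theorem already gives $R_T = \mathcal{O}(\gamma_T\sqrt{T\log T} + M_T\sqrt{\gamma_T\log T})$ on an event of probability at least $1-\delta/3$, it suffices to argue that random grid search delivers $M_T\sqrt{\gamma_T\log T} = \tilde{\mathcal{O}}(T^{(d-1)/d})$ on a further event of probability at least $1-2\delta/3$, after which a union bound completes the proof and matches the $\log(3/\delta)$ inside $\beta_t$.

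At round $t$, I would control $1-\tilde\eta_t$ by combining two ingredients. First, since $\mathcal{X}_t$ consists of $t$ i.i.d.\ uniform samples from the compact box $\mathcal{X}=[-b,b]^d$, a standard covering argument for random point sets gives that the distance from the (random) maximizer $x_t^\ast=\arg\max_{x\in\mathcal{X}}f_t(x)$ to its nearest grid neighbor $\tilde x_t\in\mathcal{X}_t$ is $\mathcal{O}((\log t/t)^{1/d})$ with probability at least $1-c_1 t^{-2}$. Second, under Assumption~\ref{assump:kernel_smooth} the derivative kernel of $k$ is well defined, so sample paths of the posterior GP are almost surely differentiable on $\mathcal{X}$, and an application of the Borell--TIS inequality to $\|\nabla f_t\|_\infty$ yields a Lipschitz constant $L_t=\mathcal{O}(\beta_t\,\mathrm{polylog}(t))$ for $f_t$ with probability at least $1-c_2 t^{-2}$. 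Combining these on the intersection event, $\alpha_t^\ast-\alpha_t(x_t)\le L_t\|\tilde x_t-x_t^\ast\|$, and invoking the positive lower bound on $\alpha_t^\ast$ justified in Appendix~\ref{sec:app:gpucb} (added as a constant shift that does not change the argmax nor the computational cost), yields the deterministic worst-case bound $1-\tilde\eta_t=\tilde{\mathcal{O}}(\beta_t\,t^{-1/d})$.

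A union bound over $t=1,\ldots,T$ using $\sum_t t^{-2}<\infty$ ensures these good events hold simultaneously with probability at least $1-2\delta/3$, and the elementary estimate $\sum_{t\le T}t^{-1/d}\lesssim T^{(d-1)/d}$ gives $M_T=\sum_{t=1}^T(1-\tilde\eta_t)=\tilde{\mathcal{O}}(\beta_T\,T^{(d-1)/d})$. Plugging into Theorem~\ref{thm:GP_TS_MFC} and absorbing the $\beta_T\sqrt{\gamma_T\log T}$ factor (which is polylogarithmic in $T$ under the SE kernel, and more generally folds into the maximum-information-gain rate) into $\tilde{\mathcal{O}}$ yields the advertised bound $R_T=\mathcal{O}(\gamma_T\sqrt{T\log T})+\tilde{\mathcal{O}}(T^{(d-1)/d})$.

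The main obstacle will be the high-probability Lipschitz bound on the random sample path $f_t$. Unlike GP-UCB, where the acquisition is a deterministic function of the posterior mean and variance, here $\alpha_t=f_t$ is itself a draw from a Gaussian process with the enlarged covariance prescribed by Theorem~\ref{thm:GP_TS_MFC}, so the Lipschitz constant is random and depends on the full trajectory of previously chosen points through $\sigma_{t-1}$. Controlling $\|\nabla f_t\|_\infty$ requires working with the differentiated GP whose kernel is obtained by twice-differentiating $k$ (finite under Assumption~\ref{assump:kernel_smooth} with $\nu\ge 2$), paired with metric-entropy or Dudley-type bounds, and the resulting tail estimate must remain compatible with the inflated variance $s_t^2=(\beta_t\sigma_{t-1}+\tilde v_t)^2$ and with the union bound across all $T$ rounds.
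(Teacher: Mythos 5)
Your proposal takes a genuinely different route from the paper --- you try to reduce Theorem~\ref{thm:random_TS} to Theorem~\ref{thm:GP_TS_MFC} by bounding the accumulated inaccuracy $M_T$ of the random grid --- but this reduction has two genuine gaps. First, a quantitative one: in Theorem~\ref{thm:GP_TS_MFC} the inaccuracy enters as $M_T\sqrt{\gamma_T\log T}$, so even granting your estimate $M_T=\tilde{\mathcal{O}}(\beta_T T^{(d-1)/d})$ you obtain a discretization term of order $\tilde{\mathcal{O}}(\gamma_T T^{(d-1)/d})$ (or at best $\tilde{\mathcal{O}}(\sqrt{\gamma_T}\,T^{(d-1)/d})$ if the constant shift cancels one factor of $\beta_T$). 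Your claim that this ``folds into the maximum-information-gain rate'' fails for the Mat\'ern kernel, where $\gamma_T$ grows polynomially: for $d\ge 3$ we have $T^{(d-1)/d}>\sqrt{T}$, so $\gamma_T T^{(d-1)/d}$ is dominated by neither $\gamma_T\sqrt{T\log T}$ nor $\tilde{\mathcal{O}}(T^{(d-1)/d})$, and the stated bound is not recovered. Second, a structural one: the algorithm analyzed in Theorem~\ref{thm:GP_TS_MFC} samples $f_t$ from a posterior whose variance is inflated by $\tilde v_t=(1/\tilde\eta_t-1)B$, which must be fixed \emph{before} $f_t$ is drawn; for random grid search $\eta_t$ depends on the realized grid and on $f_t$ itself, and $\tilde\eta_t$ is by definition a worst-case (deterministic) lower bound, which a ``with probability $1-c_1t^{-2}$'' estimate does not supply. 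Moreover, Algorithm~\ref{alg:gpts} as used in Theorem~\ref{thm:random_TS} employs no variance inflation at all, so the two theorems describe different algorithms and the reduction is not available as stated. (A smaller bookkeeping issue: union-bounding events of probability $1-c_1t^{-2}$ gives total failure probability $c_1\pi^2/6$, not $2\delta/3$.)

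The paper avoids all of this by never passing through $M_T$. It decomposes $f(x^*)-f(x_t)=\bigl(f(x^*)-f([x^*]_t)\bigr)+\Delta_t(x_t)$ with $[x^*]_t$ the grid point nearest the \emph{true} maximizer. The first piece is controlled by the Lipschitz constant of the fixed RKHS function $f$ (a constant, via Lemma~1 of \citet{de2012regret}) times the fill distance $h_t$, and $\sum_t h_t=\mathcal{O}(T^{(d-1)/d+\xi})$ by Lemmas~\ref{lem:gap} and~\ref{lemma:grid_disc} --- this is why the discretization term carries no $\gamma_T$ or $\beta_T$ factor. The second piece is handled by rerunning the Chowdhury--Gopalan saturated-points/super-martingale analysis directly on the random grid (Lemmas~\ref{lemma::exp_reg_bound}--\ref{lemma:discrep_TS_bound}), which requires no Lipschitz or Borell--TIS control of the random sample path $f_t$ at all. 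If you want to salvage your approach, you would need to measure the grid error against $f$ rather than against $f_t$, which essentially collapses it into the paper's argument.
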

\begin{remark}
Similar to the regret bounds from Section \ref{sec:inexact}, our cumulative regret bounds now incorporate an additional factor that accounts for the inaccuracies introduced by the random grid search. In both Theorem \ref{thm:random_UCB} and Theorem \ref{thm:random_TS}, these effects are represented by the term $\tilde{\mathcal{O}}\left(T^{\frac{d-1}{d}}\right)$. Importantly, our results demonstrate that even a simple random grid search is sufficient for acquisition function maximization, ensuring that Bayesian optimization asymptotically converges to the global optimum under a suitable growth rate of the maximum information gain $\gamma_T$.
\end{remark}

\begin{remark}
As the grid size increases superlinearly, the order of the inaccuracy factor can potentially decrease. For instance, if the grid size grows at an order of $t^2$, we would instead get the inaccuracy factor of $\tilde{\mathcal{O}}\left(t^{\frac{d-2}{d}}\right)$ or if the grid size grows at an order of $t^d$, we would approximately get the inaccuracy factor of $\mathcal{O}(\log t)$.
\end{remark}

\begin{remark}
In our analyses, we mainly considered the squared-exponential and Mat\'ern kernels, thanks to their popularity and existing growth rate results on their maximum information gain. The key aspect of the maximum information gain growth rate can be characterized by the decaying rate of eigenvalues associated with the kernel \citep{vakili2021information}. Naturally, as long as the decaying rate of a kernel is known, our analyses can also be extended to other types of kernels. Furthermore, our analysis can also be extended to settings with non-stationary noise variance \citep{iwazaki2025improved}.
\end{remark}

Our theoretical results indicate that random grid search is sufficient for optimizing UCB and TS acquisition functions to achieve sublinear regret bounds for both GP-UCB and GP-TS. Although many successful implementations of random grid search for acquisition function maximization exist, there has been little theoretical justification for this approach. The closest work we are aware of is by \citep{chowdhury2017kernelized}, which incorporated a grid search within the GP-TS algorithm to solve the acquisition function optimization problem. Their analysis yields sublinear regret for certain kernels but requires the grid size to scale as $t^{2d}$, where $t$ is the iteration index and $d$ is the problem dimension. This requirement is computationally demanding; indeed, in the same paper, the authors used a fixed grid size for numerical experiments, highlighting the gap between theoretical developments and practical implementations. Unlike their results, our regret bounds justify the empirical success of the TS acquisition function when using a random grid search that grows linearly with the iteration number, as demonstrated in \citep{kandasamy2018parallelised}. This result not only demonstrates the theoretical soundness of using random grid search for acquisition maximization but also provides rigorous justification for the empirical findings in \citep{kandasamy2018parallelised}, which observed that a random grid search of order $\mathcal{O}(t)$ is robust and efficient for TS algorithms. Furthermore, as Section~\ref{sec:exp} will show, the computational efficiency of random grid search approach offers a significant practical advantage over more complex solvers such as quasi-Newton methods, without sacrificing convergence guarantees.

\section{Experiments}\label{sec:exp}
In this section, we first present experimental results on synthetic functions and then a real-world automated machine learning task. In Appendix \ref{app:exp}, we provide additional details of the experimental setup and more ablation studies.

\subsection{Synthetic Experiments}
\begin{figure*}[!h]
    \centering
      \includegraphics[width=.9\linewidth]{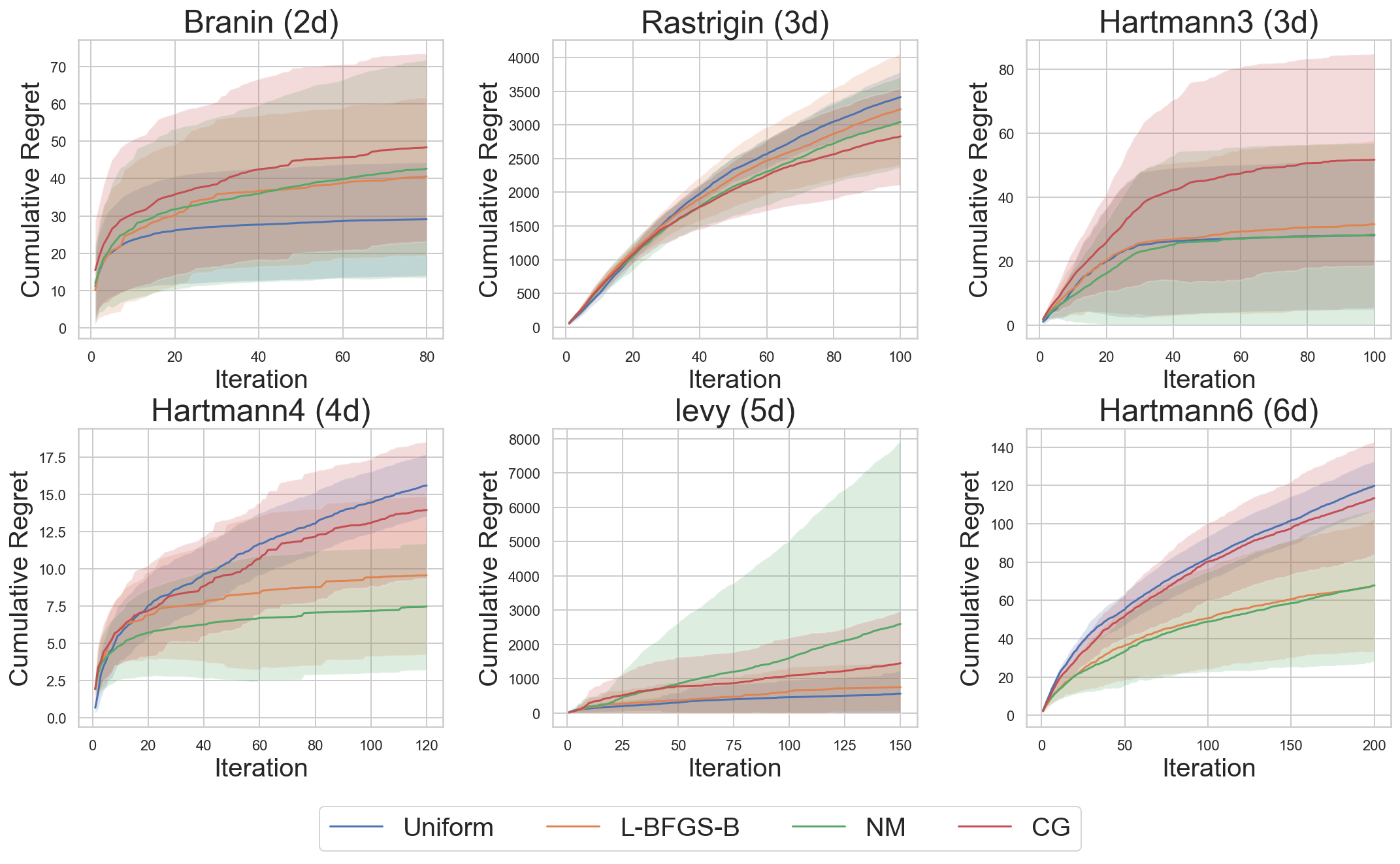}
    \caption{Cumulative regret comparison between acquisition function solvers.}\label{fig:acq_cum}
\end{figure*}
We conduct numerical experiments on six benchmark functions to demonstrate the effectiveness of random grid search combined with the UCB acquisition function. We refer the reader to \citep{kandasamy2018parallelised} for a demonstration of the effectiveness of random grid search when using the TS acquisition function. The test functions include Branin (2D on $[-5, 10] \times [0, 15]$), Rastrigin (3D on $[-5.12, 5.12]^3$), Hartmann3 (3D on $[0,1]^3$), Hartmann4 (4D on $[0,1]^4$), Levy (5D on $[-10, 10]^5$), and Hartmann6 (6D on $[0,1]^6$). For each problem, a set of initial design points is generated using a Sobol sequence. These points are scaled to the appropriate domain. We vary the number of initial design points by problem (20 for Branin, 30 for Rastrigin and Hartmann3, 40 for Hartmann4, 50 for Levy, and 60 for Hartmann6), reflecting the dimension of the objective function. After initialization, BO algorithms with different choices of acquisition function solvers are performed for a predetermined number of iterations: 80 iterations for Branin, 100 for Rastrigin, Hartmann3, and Hartmann4, 150 for Levy, and 200 for Hartmann6. In each iteration, a Gaussian process with a Mat\'ern kernel is fitted to the available data, and the UCB acquisition function is optimized with an exploration parameter $\beta_t = \sqrt{\log(t+2)}$ for each iteration $t$. The optimization of this acquisition function is carried out using one of four acquisition optimization methods: a uniform random grid search (abbreviated as Uniform), popular quasi-Newton methods including Limited-memory Broyden–Fletcher–Goldfarb–Shanno (abbreviated as L-BFGS-B) as well as Nelder–Mead (abbreviated as NM), and the conjugate gradient (abbreviated as CG) method. For the random grid size $|\cX_t|$, we set it to be $100t$, which scales linearly in terms of the number of iterations. Each method is evaluated over 20 independent experiments (with varying random seeds) to provide a statistically valid comparison in terms of both cumulative regret and computational time.

\begin{figure}[!htbp]
  \centering
  \includegraphics[width=\linewidth]{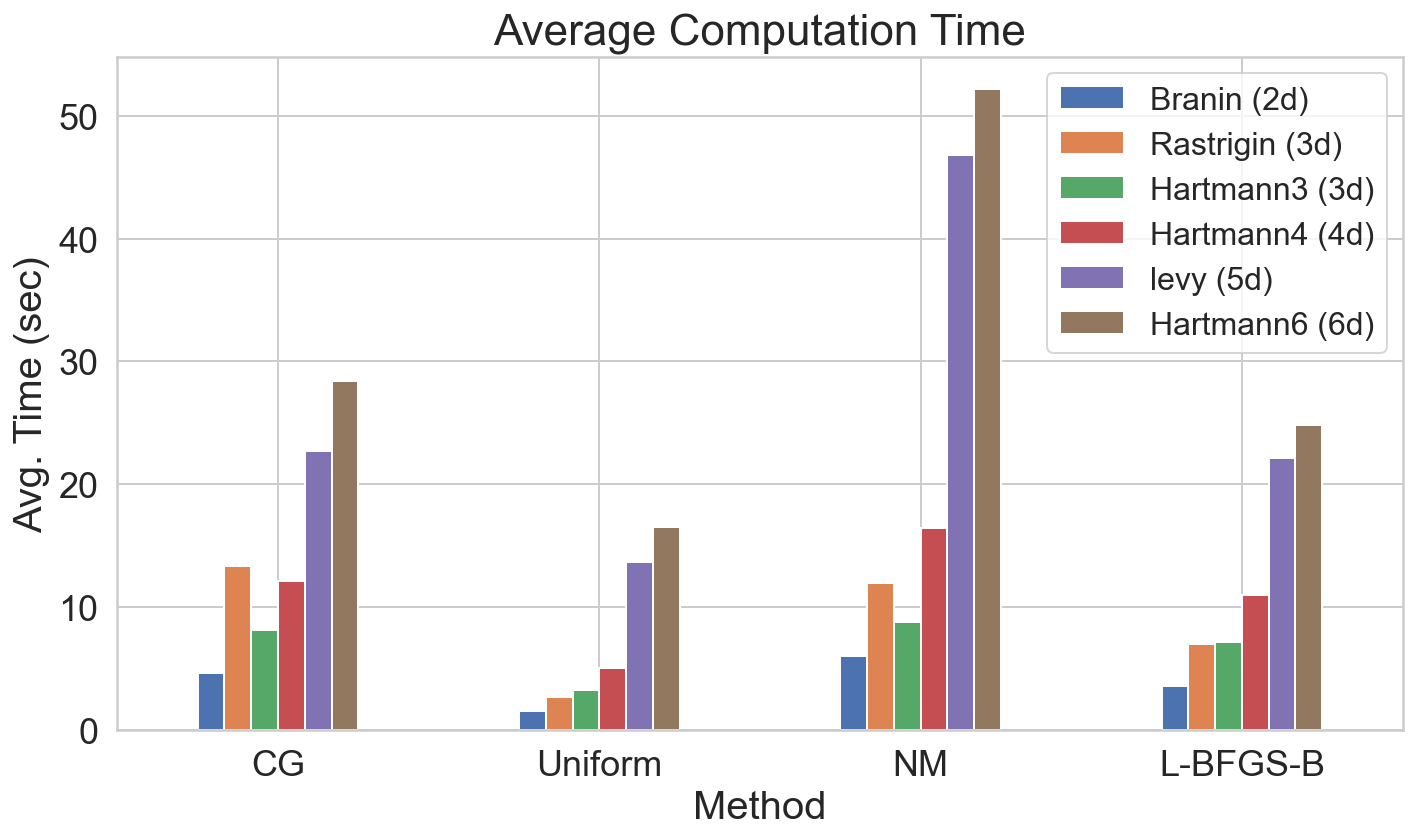}
  \caption{Computational time comparison between acquisition function solvers.}\label{fig:acq_comp_time}
\end{figure}

Figure \ref{fig:acq_cum} shows that, across a range of dimensions and objective landscapes, the uniform sample grid search consistently achieves competitive cumulative regret and keeps pace with, or in some cases outperforms, the popular acquisition optimization tools. For instance, in the 2D Branin problem, the uniform sample grid search’s cumulative regret curve is superior to that of other sophisticated optimization algorithms over the entire horizon, demonstrating rapid improvement from the outset. On the more challenging 3D Rastrigin, 3D Hartmann3, and 4D Hartmann4 functions, uniform sample grid search continues to exhibit a steady reduction in regret, closely matching or surpassing other solvers. Even for the higher-dimensional Levy (5D) and Hartmann6 (6D) problems, the uniform sample grid search approach remains impressively competitive, achieving a final cumulative regret that is comparable to the gradient-based methods.

Figure \ref{fig:acq_comp_time} further highlights the uniform sample grid search’s practical advantages: it maintains one of the lowest average runtimes across all problems, standing in stark contrast to NM, which exhibits significantly longer runtimes (especially in the 5D and 6D settings)\footnote{We provide a brief comparison of the computational complexity of the acquisition optimizers used in Figure~\ref{fig:acq_comp_time}. For the random grid search, the grid size scales as $\mathcal{O}(t)$, roughly yielding a total complexity of $\tilde{\mathcal{O}}(T^2)$ over $T$ iterations. In contrast, gradient-based methods (e.g., L-BFGS-B, CG) require iterative optimization with per-iteration costs dependent on the dimensionality. Let $I_t$ denote the number of gradient updates at step $t$, and let $\text{cost}(\text{gradient})$ be the cost of a single gradient evaluation (e.g., $\mathcal{O}(d^3)$ for Newton, $\mathcal{O}(d^2)$ for CG/BFGS). Then the overall complexity becomes $\sum_{t=1}^T \mathcal{O}(I_t \times \text{cost}(\text{gradient}))$.}.

While the gap in runtime between uniform random grid search and existing acquisition function solvers narrows in higher dimensions—partly due to the increased number of BO iterations and corresponding grid evaluations—the uniform sample grid search continues to offer a favorable balance between efficiency and regret reduction, making it an appealing choice when balancing rapid progress in regret reduction with efficient use of computational resources.

\subsection{Real-World AutoML Task}
Similarly, we perform a hyperparameter tuning task focusing on improving the validation accuracy of the gradient boosting model (from the Python \texttt{sklearn} package) on the breast-cancer dataset (obtained from the UCI machine learning repository). Details of 11 hyperparameters are shown in Appendix \ref{app:exp}. Figure \ref{fig:automl} shows the performances using different inner optimization methods for GP-UCB. We can see that random grid search (``uniform'' in the figure) achieves relatively low cumulative regret as well as low computation time from Figure \ref{fig:automl}.

\begin{figure}[t]
    \centering
    \begin{minipage}{0.9\linewidth}\centering
		\includegraphics[width=\textwidth]{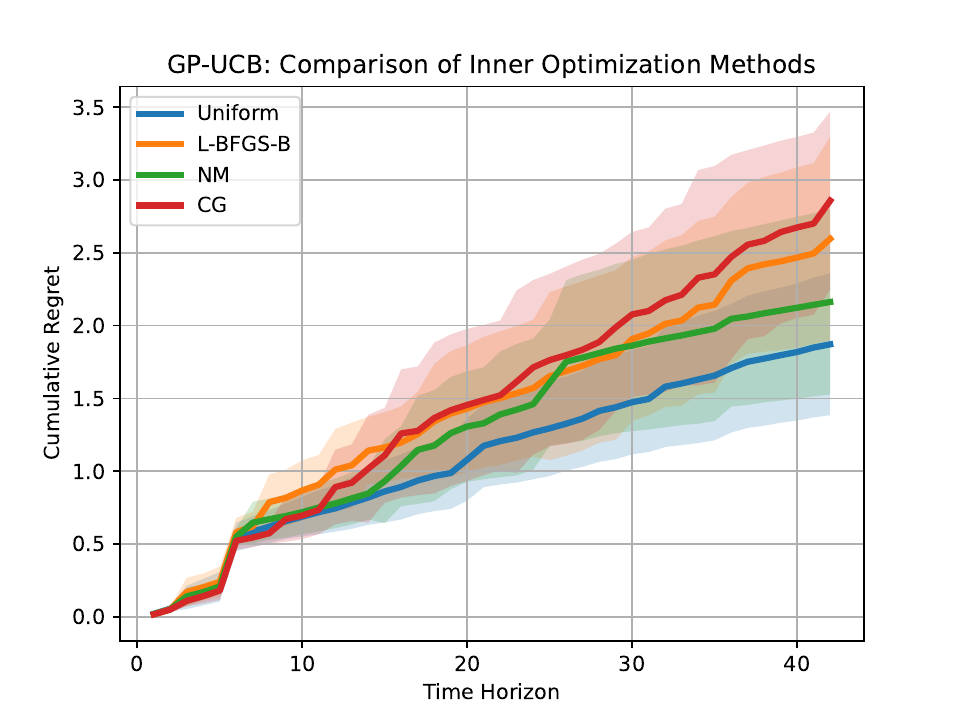}
	\end{minipage}
    \begin{minipage}{0.8\linewidth}\centering
		\includegraphics[width=\textwidth]{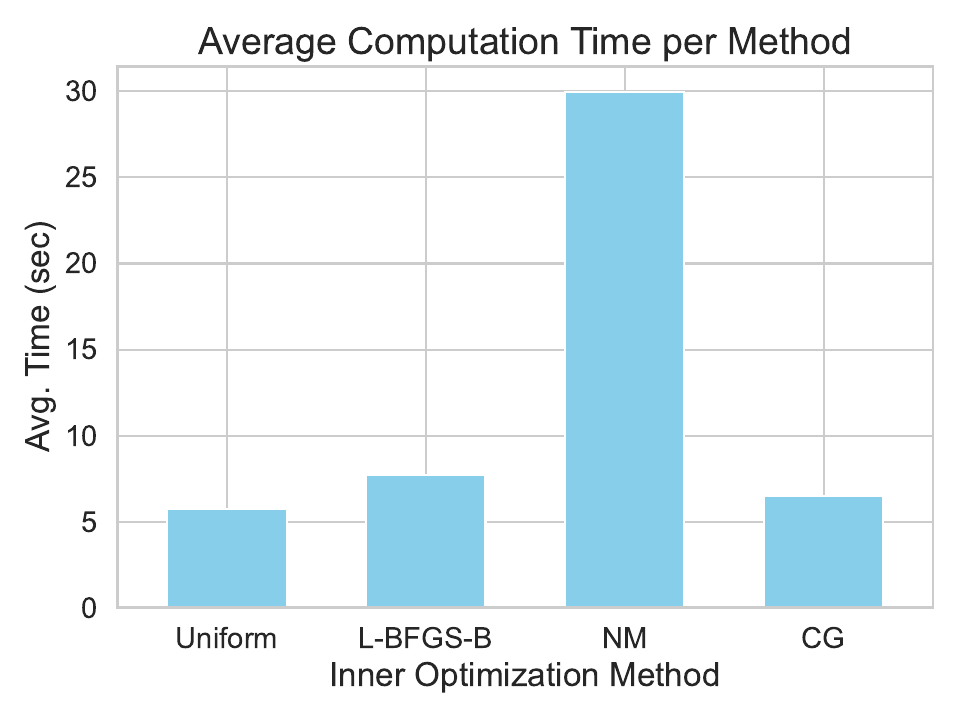}
	\end{minipage}
    \caption{Cumulative regret (upper) and average computational time (lower) of different acquisition optimization tools on the real-world AutoML task.}
    \label{fig:automl}
\end{figure}

\section{Conclusion}\label{sec:con}
%\goodbreak
In this paper, to the best of our knowledge, we study the inexact acquisition function maximization problem for BO for the first time, a topic that has been largely overlooked. Existing works predominantly operate under the assumption that an exact solution is attainable, thus allowing for the establishment of rigorous theoretical upper bounds. However, in practice, finding exact solutions is difficult due to the multimodal nature of acquisition functions

To address this discrepancy, we define a measure of inaccuracy in acquisition solution, referred to as the worst-case accumulated inaccuracy. We establish cumulative regret bounds for both inexact GP-UCB and GP-TS. Our bounds show that under some conditions on the worst-case accumulated inaccuracy, inexact BO algorithms can still achieve sublinear regrets. Moreover, we extend to provide the first theoretical validation for random grid search, showing that even with a linear growth in grid size, one can still achieve sublinear regret. Our experimental results also validate the effectiveness of random grid search in solving acquisition maximization problems.

Although our work focuses on GP-UCB and GP-TS, our analysis can be extended to other acquisition functions with similar structural properties. Additionally, our framework opens avenues for studying adaptive inexact optimization strategies, where the computational effort allocated to acquisition function maximization is dynamically adjusted based on accumulated inaccuracy. We hope our study will spur systematic theoretical work on inexact acquisition optimization and inspire empirical investigations into efficiently allocating computational resources in BO.

\begin{acknowledgements} 
This work was partially done when HK and CL were at the University of Chicago. This work was supported in part by the National Science Foundation under Grant No. IIS 2313131, IIS 2332475, and CMMI 2037026. % \chong{more funding to be added by Yuxin.} 
The authors would like to thank the reviewers and the AC for helpful comments that improved the final version of this paper.
\end{acknowledgements}

\bibliography{bib}

\newpage
\onecolumn
\appendix

\section{Proof for Theoretical Statements}
\subsection{Assumptions and Key Lemmas}
We first list out necessary assumptions and key lemmas to establish theorems stated in the manuscript. 
\begin{assumption}\label{append_assump:kernel_less_one}
    The kernel $k(x, x) \le 1$ for all $x \in \cX$.
\end{assumption}

\begin{assumption}\label{append_assump:kernel_smooth}
    We consider the kernel $k$ to be either a square-exponential kernel or a Mat\'ern kernel with smoothness parameter $\nu \ge 2$.
\end{assumption}

\begin{assumption}\label{append_assum:grid}
At iteration $t$, $\mathcal{X}_t$ is a collection of $t$ random samples drawn uniformly from $\cX$. Furthermore, we assume that the set of uniform random grids $\{\cX_t\}_{t \in \mathbb{N}}$ are independent. 
\end{assumption}

\begin{lemma}[Theorem 2 of \citet{chowdhury2017kernelized}]\label{lem:conf} Suppose $f \in \mathcal{H}_k$ with $\| f\|_{\mathcal{H}_k} \le B$. Then the following statement holds with probability at least $1-\delta$, for all $x \in \mathcal{X}$ and $t \in \mathbb{N}$,
\begin{align*}
|f(x) - \mu_{t-1}(x)| \leq \beta_t \sigma_{t-1}(x),
\end{align*}
where $\beta_t = B + R\sqrt{2(\gamma_{t-1} + 1 + \log(1/\delta))}$ and $\gamma_{t-1}$ is the maximum information gain after $(t-1)$ rounds.
\end{lemma}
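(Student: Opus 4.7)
The plan is to establish the pointwise deviation bound by combining the RKHS structure of $f$ with a self-normalized martingale concentration inequality for the noise. Writing $Y_{t-1} = \mathbf{f}_{t-1} + \epsilon_{1:t-1}$ with $\mathbf{f}_{t-1} = (f(x_1), \ldots, f(x_{t-1}))^\top$ and using the closed form of $\mu_{t-1}$, I would decompose
\[
f(x) - \mu_{t-1}(x) = \bigl[\, f(x) - k_{t-1}(x)^\top (K_{t-1} + \tau I)^{-1} \mathbf{f}_{t-1} \,\bigr] - k_{t-1}(x)^\top (K_{t-1}+\tau I)^{-1} \epsilon_{1:t-1},
\]
where the first bracket (the bias) measures the misalignment between $f$ and its kernel-ridge interpolant, and the second captures how the observation noise propagates through the posterior mean.

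To bound the bias, I would apply the reproducing property $f(x)=\langle f,k(x,\cdot)\rangle_{\mathcal{H}_k}$ and rewrite the bracket as an RKHS inner product $\langle f, h_x\rangle_{\mathcal{H}_k}$ against a residual feature $h_x$ whose squared norm matches $\sigma_{t-1}^2(x)$ (up to a $\tau$-factor), verified by direct computation against the posterior-variance formula. Cauchy-Schwarz together with $\|f\|_{\mathcal{H}_k}\le B$ then contributes the $B\,\sigma_{t-1}(x)$ piece of $\beta_t\sigma_{t-1}(x)$. For the noise term, I would view $S_{t-1} := \sum_{s=1}^{t-1}\epsilon_s\, k(x_s,\cdot)$ as a Hilbert-space-valued martingale in $\mathcal{H}_k$ and, again by Cauchy-Schwarz in the regularized inner product, reduce the task to controlling $\|S_{t-1}\|_{V_{t-1}^{-1}}$, where $V_{t-1}$ is the regularized Gram operator. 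A self-normalized concentration inequality of the Abbasi-Yadkori type then yields $\|S_{t-1}\|_{V_{t-1}^{-1}}^2 \le 2R^2\log\bigl(\det(V_{t-1})^{1/2}\det(\tau I)^{-1/2}/\delta\bigr)$ with probability $1-\delta$; recognizing the log-determinant ratio as $2\gamma_{t-1}$ produces the $R\sqrt{2(\gamma_{t-1}+1+\log(1/\delta))}$ contribution, and summing the two bounds gives the advertised $\beta_t$.

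The principal obstacle will be justifying the self-normalized inequality in an infinite-dimensional RKHS, since the classical method-of-mixtures proof is stated in $\mathbb{R}^d$. I would circumvent this by observing that $S_{t-1}$ always lies in the finite-dimensional subspace $\mathrm{span}\{k(x_s,\cdot):s\le t-1\}$, so all norms, inverses, and log-determinants can be evaluated on that subspace; the effective ``dimension'' appearing in the mixture bound becomes $\log\det(I+\tau^{-1}K_{t-1})=2\gamma_{t-1}$, and the uniform-in-$t$ statement is obtained from a stopping-time construction rather than a naive union bound over $t$. Once this ingredient is in place, the bias and noise bounds combine cleanly on a single $1-\delta$ event to deliver the stated confidence inequality uniformly in $x \in \mathcal{X}$ and $t \in \mathbb{N}$.
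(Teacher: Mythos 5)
Your proposal is essentially correct, but note first that the paper supplies no proof of this lemma at all: it is imported verbatim as Theorem~2 of \citet{chowdhury2017kernelized}, so the only fair comparison is against that source. Your route --- decomposing $f(x)-\mu_{t-1}(x)$ into a kernel-ridge bias term plus a noise term, bounding the bias by $B\,\sigma_{t-1}(x)$ via the reproducing property and Cauchy--Schwarz, and controlling the noise through the self-normalized norm of the $\mathcal{H}_k$-valued martingale $S_{t-1}=\sum_s \epsilon_s k(x_s,\cdot)$ --- is the ``kernelized Abbasi-Yadkori'' argument, and it does deliver the stated bound (in fact without the $+1$, which only tightens it). This is a genuinely different path from the one in \citet{chowdhury2017kernelized}: there the authors prove a ``double mixture'' self-normalized inequality directly on the quadratic form of the noise vector $\epsilon_{1:t-1}$, mixing over a Gaussian process prior on function space precisely to avoid the infinite-dimensional method-of-mixtures issue you identify; the price they pay is a slightly inflated regularization parameter and the $+1$ inside the square root. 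Your approach buys a cleaner conceptual split (bias vs.\ noise, each bounded by $\sigma_{t-1}(x)$ times an explicit constant), while theirs buys a rigorous uniform-in-$t$ statement without having to argue about a growing finite-dimensional span. Two points in your sketch deserve care if written out in full: (i) the Cauchy--Schwarz step on the noise term produces a factor $\sigma_{t-1}(x)/\sqrt{\tau}$, so absorbing it into $\sigma_{t-1}(x)$ needs $\tau\ge 1$ (this is exactly why \citet{chowdhury2017kernelized} take the regularizer strictly larger than one); and (ii) the stopped-supermartingale construction over the growing span $\mathrm{span}\{k(x_s,\cdot):s\le t-1\}$ must be set up so that the mixture measure does not depend on $t$ --- the standard fix is to run the mixture over a Gaussian measure on the whole (separable) RKHS, as in Abbasi-Yadkori's kernelized analysis, rather than over the span itself. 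Neither issue is fatal, and the remainder of your argument, including the identification of the log-determinant ratio with the information gain $\gamma_{t-1}$, is correct.
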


\begin{lemma}[Lemma 4 of \citet{chowdhury2017kernelized}]\label{lem:sigma_bound} Let $\{z_1, \cdots, z_T\}$ be the points selected by arbitrary BO strategy. Then the following holds, 
\begin{align*}
\sum_{t=1}^T \sigma_{t-1}(z_t) \leq \sqrt{4(T+2)\gamma_T}
\end{align*}
where $\gamma_T$ is the maximum information gain after $T$ rounds. 
\end{lemma}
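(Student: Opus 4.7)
The plan is to establish this bound via the classical information-theoretic argument of Srinivas et al., later refined in \citet{chowdhury2017kernelized}. First I would apply Cauchy--Schwarz to write
$$\sum_{t=1}^T \sigma_{t-1}(z_t) \;\le\; \sqrt{T \sum_{t=1}^T \sigma_{t-1}^2(z_t)},$$
which reduces the task to controlling the sum of squared posterior standard deviations by a quantity of order $\gamma_T$.

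The next step is to convert each $\sigma_{t-1}^2$ into a logarithmic quantity that ties into the information gain. Because Assumption \ref{append_assump:kernel_less_one} gives $\sigma_{t-1}^2(z_t) \le k(z_t,z_t) \le 1$, the elementary inequality $s \le \frac{c}{\log(1+c)}\log(1+s)$, valid for $s \in [0,c]$, applied pointwise with $s = \tau^{-1}\sigma_{t-1}^2(z_t)$ and $c = \tau^{-1}$, yields
$$\sigma_{t-1}^2(z_t) \;\le\; \frac{1}{\log(1+\tau^{-1})}\,\log\!\Bigl(1 + \tau^{-1}\sigma_{t-1}^2(z_t)\Bigr).$$
Summing over $t$ and invoking the standard telescoping identity
$$\tfrac{1}{2}\log\det\!\bigl(I_T + \tau^{-1}K_T\bigr) \;=\; \tfrac{1}{2}\sum_{t=1}^T \log\!\Bigl(1 + \tau^{-1}\sigma_{t-1}^2(z_t)\Bigr),$$
together with the definition of $\gamma_T$ from Section~\ref{sec:pre}, bounds $\sum_{t=1}^T \sigma_{t-1}^2(z_t)$ by $\frac{2\gamma_T}{\log(1+\tau^{-1})}$. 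Substituting back into the Cauchy--Schwarz bound and choosing $\tau$ consistently with the algorithmic setup---concretely $\tau = 1 + 2/T$, so that $\frac{2T}{\log(1+\tau^{-1})} \le 4(T+2)$---delivers exactly the claimed $\sqrt{4(T+2)\gamma_T}$.

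The main obstacle is the bookkeeping around the regularization parameter $\tau$ and its interaction with the normalization of $\gamma_T$: different conventions across the GP-bandit literature yield slightly different leading constants, so $\tau$ must be fixed consistently with the posterior formulas $\mu_{t-1}, \sigma_{t-1}^2$ used in Algorithms~\ref{alg:gpucb} and \ref{alg:gpts} in order for the factor $4(T+2)$ to emerge cleanly. Beyond this calibration, the proof is essentially Cauchy--Schwarz plus the telescoping identity for Gaussian log-determinants, so no novel analytical ideas are required, and in particular the lemma holds for any choice of $\{z_t\}_{t=1}^T$, regardless of whether those points were produced by an exact or inexact acquisition solver.
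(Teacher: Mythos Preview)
The paper does not supply its own proof of this lemma; it is quoted verbatim as Lemma~4 of \citet{chowdhury2017kernelized} and used as a black box. Your argument---Cauchy--Schwarz, the concavity bound $s \le \log(1+s/\tau)/\log(1+\tau^{-1})$ for $s\in[0,1]$, the telescoping log-determinant identity, and the calibration $\tau = 1+2/T$ (so that $\log(1+\tau^{-1}) \ge \tfrac{T}{2(T+2)}$ via $\log(1+u)\ge u/2$ on $[0,1]$)---is exactly the proof in that reference, so your proposal is correct and coincides with the source the paper defers to.
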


Throughout the remainder of the appendix, we use the notation $\mathcal{O}(t^\xi)$, for arbitrarily small $\xi > 0$, to represent logarithmic factors in $t$.

\begin{lemma}[Proposition 4 of \citet{helin2022introduction}]\label{lem:gap}
Let $\tilde x_1,...,\tilde x_t$ be $t$ uniform random samples from a hyperrectangle $\cX \subseteq \mathbb{R}^d$. Define $h_t = \sup_{x \in \cX} \inf_{i=1,...t} \|x - \tilde x_i\|$, then there exists $t^*$ such that $\forall t \ge t^*$, 
\begin{align*}
\E [h_t] = \mathcal{O}(t^{-\frac{1}{d}+\xi}),
\end{align*}
where $\xi > 0$ is an arbitrarily small positive constant.
\end{lemma}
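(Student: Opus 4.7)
The plan is to prove this bound via a covering argument combined with a union bound over small cells. Concretely, I would partition the hyperrectangle $\cX$ with axis-aligned hypercubes of side length $\epsilon > 0$, producing $N \asymp |\cX|/\epsilon^d$ cells, each of diameter $\sqrt{d}\,\epsilon$. If every cell contains at least one of the $t$ sample points, then for every $x \in \cX$ there is a sample lying in the same cell as $x$, and hence $h_t \le \sqrt{d}\,\epsilon$ deterministically on this good event. Thus the core task reduces to controlling the probability that some cell is empty.

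For that probabilistic step, each of the $t$ i.i.d.\ uniform samples lands in a given cell with probability $\epsilon^d/|\cX|$, so the probability that a specific cell is empty is at most $\exp(-t\epsilon^d/|\cX|)$. A union bound over the $N$ cells then yields
$$
\P\!\left(h_t > \sqrt{d}\,\epsilon\right) \;\le\; \frac{|\cX|}{\epsilon^d}\,\exp\!\left(-\frac{t\,\epsilon^d}{|\cX|}\right).
$$
Choosing $\epsilon = C\,(\log t / t)^{1/d}$ for a sufficiently large constant $C$ makes the right-hand side decay polynomially in $t$, say as $t^{-\alpha}$ for any desired $\alpha > 0$.

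To pass from high probability to expectation, I would split on the good event $\{h_t \le \sqrt{d}\,\epsilon\}$ versus its complement and use the deterministic bound $h_t \le \mathrm{diam}(\cX)$:
$$
\E[h_t] \;\le\; \sqrt{d}\,\epsilon \;+\; \mathrm{diam}(\cX)\,\P\!\left(h_t > \sqrt{d}\,\epsilon\right).
$$
The first term is $\mathcal{O}\bigl((\log t/t)^{1/d}\bigr)$ and the second is a polynomially small remainder. Finally, using $\log t = o(t^{d\xi})$ for any $\xi > 0$, the logarithmic factor can be absorbed into $t^\xi$, yielding $\E[h_t] = \mathcal{O}(t^{-1/d+\xi})$ for all $t \ge t^*$ with $t^* = t^*(\xi, d, |\cX|)$.

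The main subtlety is not a hard step but careful bookkeeping of constants and of the threshold $t^*$ so that the logarithmic factor is uniformly swallowed into $t^\xi$; one must also handle the minor boundary issue when $\epsilon$ does not evenly divide the side lengths of $\cX$, which is easily dispatched by letting the cover slightly overshoot $\cX$ without changing the rate. Sharper covering arguments could in fact deliver the precise $(\log t / t)^{1/d}$ rate, but the statement of the lemma with an arbitrarily small slack $\xi$ is exactly what this elementary partitioning and union-bound strategy provides.
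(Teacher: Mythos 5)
Your argument is correct and, in fact, does more than the paper does: the paper never proves this lemma at all, importing it wholesale as Proposition~4 of \citet{helin2022introduction}, so your covering-plus-union-bound derivation is a genuinely self-contained alternative. The chain of steps is sound: partitioning into $N\asymp |\cX|/\epsilon^d$ cells of diameter $\sqrt{d}\,\epsilon$, bounding the probability that a fixed cell is empty by $(1-\epsilon^d/|\cX|)^t\le \exp(-t\epsilon^d/|\cX|)$, union bounding, choosing $\epsilon = C(\log t/t)^{1/d}$ with $C^d/|\cX|$ large enough that the failure probability decays like $t^{-\alpha}$ with $\alpha\ge 1/d$, and then splitting the expectation using $h_t\le\mathrm{diam}(\cX)$ on the bad event. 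This actually delivers the sharp rate $\E[h_t]=\mathcal{O}\bigl((\log t/t)^{1/d}\bigr)$, of which the lemma's $t^{-1/d+\xi}$ is a strict weakening, so nothing is lost. The one place your bookkeeping is slightly off is the boundary remark: letting the cover ``overshoot'' $\cX$ does not by itself guarantee that every cell receives probability mass $\epsilon^d/|\cX|$ --- a sliver cell whose intersection with $\cX$ has tiny volume is empty with probability near one, and points of $\cX$ in that sliver would then have no nearby sample. The clean fix is the opposite of overshooting: tile each side of length $L_i$ into exactly $\lceil L_i/\epsilon\rceil$ equal pieces, so every cell lies inside $\cX$, has side length between $\epsilon/2$ and $\epsilon$ (for $\epsilon\le\min_i L_i$), and hence has volume at least $(\epsilon/2)^d$; this changes only constants and the rest of your argument goes through verbatim.
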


In the following lemma, we establish a high-probability upper bound on the cumulative discrepancy between the random grid $\cX_t$ and the search space $\cX$.
\begin{lemma}\label{lemma:grid_disc}
    Let $h_t = \sup_{x \in \mathcal{X}}\inf_{\tilde x_i \in \mathcal{X}_t} \|x-\tilde x_i\|$. For $\delta > 0$ small, with probability at least $1-\delta$, we have
    $$
        \sum_{t=1}^T h_t \le \sum_{t=1}^{t^*} h_t +  \frac{CT^{\frac{d-1}{d} + \xi}}{\delta} ,
    $$
    for some $C > 0$ and $t^* \in \mathbb{N}$, with arbitrarily small $\xi > 0$.
\end{lemma}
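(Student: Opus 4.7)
The plan is to peel off the first $t^*$ terms (for which the expectation bound from Lemma~\ref{lem:gap} is not yet guaranteed) and then bound the remaining tail sum via Markov's inequality applied to the nonnegative random variable $S_T := \sum_{t=t^*+1}^{T} h_t$.

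First I would compute $\mathbb{E}[S_T]$. By linearity and Lemma~\ref{lem:gap}, for each $t \ge t^*$ we have $\mathbb{E}[h_t] \le C_0\, t^{-1/d + \xi}$ for some constant $C_0 > 0$ and arbitrarily small $\xi > 0$. Summing and comparing with the integral $\int_{1}^{T} s^{-1/d + \xi}\, ds$ gives
\begin{equation*}
\mathbb{E}[S_T] \;\le\; C_0 \sum_{t=t^*+1}^{T} t^{-1/d + \xi} \;\le\; C_1\, T^{1 - 1/d + \xi} \;=\; C_1\, T^{(d-1)/d + \xi},
\end{equation*}
for a suitable constant $C_1$ depending on $C_0$, $d$, and $\xi$ (the constant absorbs the $1/(1-1/d+\xi)$ factor from integration, which is harmless for $d \ge 2$).

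Next, because $h_t \ge 0$ almost surely, $S_T$ is a nonnegative random variable, so Markov's inequality yields
\begin{equation*}
\mathbb{P}\!\left( S_T \;\ge\; \frac{C_1\, T^{(d-1)/d + \xi}}{\delta} \right) \;\le\; \delta.
\end{equation*}
Setting $C := C_1$ and complementing the event, with probability at least $1-\delta$ we have $S_T \le C T^{(d-1)/d + \xi}/\delta$. Adding back the deterministic initial segment $\sum_{t=1}^{t^*} h_t$ (which is bounded since $\mathcal{X}$ is compact and $t^*$ is a fixed threshold) yields the claimed bound.

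I do not anticipate a serious obstacle here; the lemma is essentially a first-moment/Markov argument. The only mildly delicate points are (i) ensuring that $1-1/d+\xi > 0$ so the integral bound is valid, which holds for $d \ge 2$ with $\xi$ chosen small, and (ii) noting that the independence of $\{\mathcal{X}_t\}$ from Assumption~\ref{append_assum:grid} is not actually needed for Markov's inequality---linearity of expectation suffices---though independence could be exploited to sharpen the bound via a variance/Chebyshev argument if one wanted to replace the $1/\delta$ factor by $\sqrt{\log(1/\delta)}$. For our purposes the Markov bound is enough since the resulting $T^{(d-1)/d + \xi}$ rate will be absorbed into the $\tilde{\mathcal{O}}(T^{(d-1)/d})$ inaccuracy term in Theorems~\ref{thm:random_UCB} and~\ref{thm:random_TS}.
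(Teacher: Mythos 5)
Your proposal is correct and follows essentially the same route as the paper: bound $\mathbb{E}\bigl[\sum_{t \ge t^*} h_t\bigr]$ by $C\,T^{\frac{d-1}{d}+\xi}$ using the expectation bound from Lemma~\ref{lem:gap}, apply Markov's inequality to the nonnegative tail sum, and add back the first $t^*$ terms. The extra remarks (integral comparison for the sum, independence being unnecessary for Markov) are accurate but do not change the argument.
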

\begin{proof}
    From Lemma \ref{lem:gap}, we know that there exists $C > 0, \xi > 0$ and $t^* \in \mathbb{N}$ such that $\mathbb{E}[h_t] \le C t^{-\frac{1}{d} + \xi}$ for all $t \ge t^*$. From the Markov's inequality, we know that
    $$
     \mathbb{P}\left[\sum_{t=t^*}^T h_t > \frac{CT^{\frac{d-1}{d} + \xi}}{\delta}\right] \le \frac{\mathbb{E}\left[\sum_{t=t^*}^T h_t \right] }{CT^{\frac{d-1}{d} + \xi}} \delta \le \delta
    $$
    where the second inequality follows from $\mathbb{E}[\sum_{t=t^*}^T h_t]  \le CT^{\frac{d-1}{d}+\xi} $.
\end{proof}

\subsection{GP-UCB/GP-TS with inexactness}\label{sec:app:gpucb}
We first provide justifications for the nonnegativity of the UCB acquisition function after the constant shift. To this end, we show the existence of a constant $C$ such that $\alpha_t + C$ is non-negative on the search space $\cX$.

\begin{lemma}
    For some $\delta > 0$, a constant shifted $t^{\text{th}}$ acquisition function of the GP-UCB algorithm, given by
    $$
        \alpha_t(x) + C = \mu_{t-1}(x) + \beta_t \sigma_{t-1}(x) + C
    $$
    is non-negative on $\cX$ with probability $1-\delta$, for some constant sequence $C \in \mathbb{R}$. 
    \end{lemma}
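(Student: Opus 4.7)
The plan is to leverage the high-probability confidence bound of Lemma \ref{lem:conf} together with the modeling assumption $f:\cX\to[0,\infty)$ from Section \ref{sec:problem} to conclude that the unshifted UCB acquisition is already nonnegative on $\cX$ with probability $1-\delta$. In other words, the trivial deterministic choice $C\equiv 0$ will satisfy the statement, and the lemma becomes essentially a corollary of the confidence bound; no separate probabilistic machinery is required beyond what is already used in Section \ref{sec:gpucb}.

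Concretely, the first step is to invoke Lemma \ref{lem:conf} with $\beta_t = B + R\sqrt{2(\gamma_{t-1}+1+\log(1/\delta))}$ to obtain, on a single event of probability at least $1-\delta$, the uniform bound $|f(x)-\mu_{t-1}(x)|\le \beta_t\sigma_{t-1}(x)$ for all $x\in\cX$ and all $t\in\mathbb{N}$. The second step is to extract the lower tail $\mu_{t-1}(x)\ge f(x) - \beta_t\sigma_{t-1}(x)$ and add $\beta_t\sigma_{t-1}(x)$ to both sides, yielding $\mu_{t-1}(x)+\beta_t\sigma_{t-1}(x)\ge f(x)$. The third step invokes the standing assumption $f\ge 0$ to obtain $\alpha_t(x) = \mu_{t-1}(x)+\beta_t\sigma_{t-1}(x)\ge f(x)\ge 0$ uniformly on $\cX$ and $t$, so that $C\equiv 0$ meets the requirement of the lemma on precisely the same confidence event.

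I do not expect any substantive obstacle: the proof reduces to a three-line consequence of an already-cited lemma. The only care to exercise is ensuring the event on which $C=0$ suffices coincides with the confidence event used throughout the paper, so that no additional union bound over $t$ needs to be paid. As a fallback, if one wished to dispense with the assumption $f\ge 0$, it would suffice to bound $|\mu_{t-1}(x)|$ crudely and deterministically, using $\|k_{t-1}(x)\|\le \sqrt{t-1}$ from Assumption \ref{append_assump:kernel_less_one}, the operator-norm bound $\|(K_{t-1}+\tau I)^{-1}\|\le 1/\tau$, and a sub-Gaussian tail control on $\|Y_{t-1}\|$, and then take $C$ to be the resulting upper bound on $|\mu_{t-1}|$; since $\beta_t\sigma_{t-1}\ge 0$ always, this still yields $\alpha_t + C\ge 0$. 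Under the assumptions already in force, however, this fallback is unnecessary and the direct argument above is preferred.
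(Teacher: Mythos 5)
Your proposal is correct, and it rests on the same key ingredient as the paper's proof: the upper-confidence inequality $f(x)\le \mu_{t-1}(x)+\beta_t\sigma_{t-1}(x)$ holding uniformly over $x\in\cX$ and $t$ on the single probability-$(1-\delta)$ event of Lemma \ref{lem:conf}. The only difference is the final step. You invoke the standing modeling assumption $f:\cX\to[0,\infty)$ from Section \ref{sec:problem} to conclude $\alpha_t(x)\ge f(x)\ge 0$ directly, so $C\equiv 0$ suffices. The paper instead avoids relying on nonnegativity of $f$: it writes $0\le f(x)+|f(x)|\le \mu_{t-1}(x)+\beta_t\sigma_{t-1}(x)+|f(x)|$ and uses $\sup_x|f(x)|\le\|f\|_{\cH_k}\le B$ to take $C=B$. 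Your route gives a smaller (zero) shift but is tied to the convention that $f$ is nonnegative, which is a cosmetic normalization of the problem; the paper's choice $C=B$ only uses the RKHS-norm bound that is the substantive hypothesis of all the regret theorems, so it remains valid if one drops the $f\ge 0$ convention. Your fallback via a crude bound on $|\mu_{t-1}|$ is unnecessary given either argument, and is essentially a more laborious version of what the $|f(x)|\le B$ step already accomplishes. Both arguments live on the same confidence event used in Section \ref{sec:gpucb}, so neither incurs an extra union bound.
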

\begin{proof}
From Lemma \ref{lem:conf}, we know that
$$
f(x) \le \mu_{t-1}(x) + \beta_t \sigma_{t-1}(x),
$$
for all $x \in \cX$ with probability $1-\delta$. Therefore, 
$$
0 \le f(x) + |f(x)|  \le  \mu_{t-1}(x) + \beta_t \sigma_{t-1}(x) + |f(x)|.
$$
Since $\sup_{x}|f(x)| \le \|f\|_{\mathcal{H}_k} \le B$, we have that $ \mu_{t-1}(x) + \beta_t \sigma_{t-1}(x) + B \ge 0$.
\end{proof}
We next provide justifications for the nonnegativity of the TS acquisition function after the constant shift. To this end, we show the existence of a constant $C_t$ such that $\alpha_t + C_t$ is non-negative on the search space $\cX_t$.
\begin{lemma}
    For some $\delta > 0$, the constant shifted $t^{\text{th}}$ acquisition function of the robust GP-TS algorithm, given by
    $$
        \alpha_t(x) + C_t = f_{t}(x)  + C_t
    $$
    is non-negative on $\cX_t$ with probability $1-\delta$, for some sequence $C_t$. 
    \end{lemma}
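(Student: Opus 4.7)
The plan is to bound the sampled path $f_t$ from below uniformly on the finite grid $\mathcal{X}_t$ using Gaussian tail inequalities, and then absorb the resulting lower bound into the constant $C_t$. The starting observation is that, conditional on the observed history, $f_t(x)$ is a Gaussian random variable with mean $\mu_{t-1}(x)$ and standard deviation $s_t(x) = \beta_t\sigma_{t-1}(x) + \tilde v_t$. Hence for any $c_t > 0$,
\[
\mathbb{P}\!\left(f_t(x) < \mu_{t-1}(x) - c_t\, s_t(x)\right) \le \exp(-c_t^2/2).
\]

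Next I would take a union bound over the random grid and over iterations. By Assumption~\ref{append_assum:grid}, $|\mathcal{X}_t| = t$, so choosing $c_t = \sqrt{2\log(|\mathcal{X}_t|\,\pi^2 t^2/(6\delta'))}$ with $\delta' = \delta/2$ and summing $6\delta'/(\pi^2 t^2)$ over $t$ yields, with probability at least $1-\delta'$, the simultaneous lower bound $f_t(x) \ge \mu_{t-1}(x) - c_t s_t(x)$ for every $t\in\mathbb{N}$ and every $x \in \mathcal{X}_t$. On the same event (intersected with the probability-$1-\delta'$ event of Lemma~\ref{lem:conf}), the posterior mean is controlled by the confidence bound and the RKHS-norm constraint: $\mu_{t-1}(x) \ge f(x) - \beta_t\sigma_{t-1}(x) \ge -B - \beta_t$, where I also used $\sigma_{t-1}(x) \le \sqrt{k(x,x)} \le 1$ from Assumption~\ref{append_assump:kernel_less_one}. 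The same bound on $\sigma_{t-1}$ gives $s_t(x) \le \beta_t + \tilde v_t$.

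Combining these estimates on the intersection event (which has probability at least $1-\delta$), we obtain
\[
f_t(x) \;\ge\; -B - \beta_t - c_t(\beta_t + \tilde v_t) \quad \text{for all } x\in\mathcal{X}_t,\; t\in\mathbb{N},
\]
so the choice $C_t := B + \beta_t + c_t(\beta_t + \tilde v_t)$ makes $\alpha_t + C_t = f_t + C_t$ non-negative on $\mathcal{X}_t$ for every $t$. The only mildly delicate point is the double union bound: one must ensure that the failure budget is distributed across both the $|\mathcal{X}_t|$ grid points and the infinite sequence of iterations without exploding $c_t$; this is why I would use a $t^{-2}$-weighted allocation, which keeps $c_t$ growing only logarithmically in $t$ and hence keeps $C_t$ finite. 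Since the statement only requires the existence of a sequence $C_t$, no further refinement of the constants is needed.
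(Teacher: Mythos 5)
Your proposal is correct and follows essentially the same route as the paper: a Gaussian tail bound with a union over the finite grid to control $f_t - \mu_{t-1}$ (the paper cites Lemma 5 of \citet{chowdhury2017kernelized} for this step, which you rederive directly), combined with the RKHS confidence bound and $\sigma_{t-1}\le 1$ to lower-bound $\mu_{t-1}$ by $-B-\beta_t$, with the resulting lower bound absorbed into $C_t$. Your additional $t^{-2}$-weighted union over iterations is a harmless strengthening that the paper does not bother with, since the statement only requires a per-iteration guarantee.
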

\begin{proof}
From Lemma 5 of \citep{chowdhury2017kernelized}, conditioning on the history until time $t$, i.e., $\cH_t:=\{(x_1, y_1), \cdots, (x_t, y_t)\}$, we know that
$$
\forall x \in \mathcal{X}_t, \quad |f_t(x)-\mu_{t-1}(x)| \le \sqrt{2\log(|\mathcal{X}_t|/\delta)}s_{t-1}(x),
$$
with probability $1-\delta/2$. Therefore, from the definition $s_{t-1}(x) = \beta_t \sigma_{t-1}(\cdot) + v_t$, we have
\begin{equation}\label{lem12:first}
\mu_{t-1}(x) - \sqrt{2\log(|\mathcal{X}_t|/\delta)}(\beta_T + v) \le \mu_{t-1}(x) - \sqrt{2\log(|\mathcal{X}_t|/\delta)}s_{t-1}(x) \le f_t(x),
\end{equation}
where $v_t = \left(\frac{1}{\eta_t}-1\right)B$. Using Lemma \ref{lem:conf} with $\beta_t\sigma_t$ replaced with $s_t$, with probability at least $1-\delta/2$
\begin{equation}\label{lem12:second}
-B - (\beta_T+v) \le f(x) - s_{t-1}(x)\le\mu_{t-1}(x)
\end{equation}
holds for all $x \in \cX$.
Combining \eqref{lem12:first} and \eqref{lem12:second}, we observe that
$$
f_t(x) + (1+\sqrt{2\log(|\mathcal{X}_t|/\delta)})(\beta_T + v) + B \ge 0,
$$
for all $x \in \cX_t$ with probability at least $1-\delta$.
\end{proof}

\begin{theorem}[Restatement of Theorem \ref{thm:GP_UCB_MFC}]\label{append_thm:GP_UCB_MFC}
Under assumptions \ref{append_assump:kernel_less_one} and \ref{append_assump:kernel_smooth}, suppose an objective function $f \in \mathcal{H}_k$ with $\|f\|_{\mathcal{H}_k} \le B$. The inexact GP-UCB algorithm with $\beta_t = B + R\sqrt{2(\gamma_{t-1} + 1 + \log(1/\delta))}$ and worst-case accumulated inaccuracy $M_T$ yields a cumulative regret bound of the form,
$$
R_T = \mathcal{O}\left(\gamma_T\sqrt{T}  + M_T\sqrt{\gamma_T} \right),
$$ 
with probability $1-\delta$.
\end{theorem}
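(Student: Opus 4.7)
The plan is to extend the standard GP-UCB regret analysis by splitting the instantaneous regret $r_t$ into the familiar UCB slack term, which is handled by the confidence ellipsoid, and a new penalty term that cleanly encodes the per-round inaccuracy $1-\eta_t$. Throughout, I would condition on the high-probability event of Lemma~\ref{lem:conf}, which holds with probability $1-\delta$ and gives $|f(x) - \mu_{t-1}(x)| \le \beta_t \sigma_{t-1}(x)$ uniformly in $x$ and $t$; in particular, this yields the one-sided inequality $f(x^*) \le \alpha_t(x^*) \le \alpha_t^*$ that drives the whole argument.

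The core decomposition I would write is
\begin{align*}
r_t = f^* - f(x_t) \le \alpha_t^* - f(x_t) = \bigl(\alpha_t(x_t) - f(x_t)\bigr) + (1 - \eta_t)\,\alpha_t^*,
\end{align*}
where the identity uses $\alpha_t(x_t) = \eta_t\,\alpha_t^*$ (the nonnegativity of $\alpha_t^*$ required to make this decomposition monotone in $1-\eta_t$ is justified by the constant-shift argument in Appendix~\ref{sec:app:gpucb}). The confidence bound controls the first summand via $\alpha_t(x_t) - f(x_t) = \mu_{t-1}(x_t) + \beta_t \sigma_{t-1}(x_t) - f(x_t) \le 2\beta_t \sigma_{t-1}(x_t)$, which is exactly the source of the $\gamma_T\sqrt{T}$ term after summing. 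For the second summand, I would derive a uniform upper bound on $\alpha_t^*$: combining $\sup_x|f(x)| \le \|f\|_{\mathcal{H}_k} \le B$ with Assumption~\ref{append_assump:kernel_less_one} and the confidence bound gives $\mu_{t-1}(x) + \beta_t \sigma_{t-1}(x) \le B + 2\beta_t$ uniformly in $x$, so $\alpha_t^* = \mathcal{O}(\beta_t)$.

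Finally, I would sum over $t=1,\ldots,T$. Using $1-\eta_t \le 1-\tilde\eta_t$, the penalty piece telescopes into $\mathcal{O}(\beta_T\,M_T)$ by the very definition $M_T = \sum_t (1-\tilde\eta_t)$. Cauchy--Schwarz together with Lemma~\ref{lem:sigma_bound} controls the slack piece as $\sum_{t=1}^T \beta_t\,\sigma_{t-1}(x_t) \le \beta_T\sqrt{4(T+2)\gamma_T}$. Inserting the prescribed growth $\beta_T = \mathcal{O}(\sqrt{\gamma_T})$ (absorbing the $\log(1/\delta)$ factor) then produces the stated bound $R_T = \mathcal{O}\bigl(\gamma_T\sqrt{T} + M_T\sqrt{\gamma_T}\bigr)$. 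The main obstacle is the uniform bound on $\alpha_t^*$: this is the step where the nonnegativity-by-constant-shift reduction becomes essential, because otherwise $(1-\eta_t)\,\alpha_t^*$ could have the wrong sign and the ``accuracy-ratio'' reformulation would fail to give an upper bound on the regret. Once this bound is in hand, everything else is standard GP-UCB machinery augmented by a single additive term that is exactly the accumulated-inaccuracy penalty.
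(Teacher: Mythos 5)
Your proposal is correct and follows essentially the same route as the paper's proof: the same regret decomposition into a $2\beta_t\sigma_{t-1}(x_t)$ confidence-slack term plus a $(1-\eta_t)\alpha_t^*$ inaccuracy term, the same uniform bound $\alpha_t^* \le B + 2\beta_t$, and the same final summation via Lemma~\ref{lem:sigma_bound} and $\beta_T = \mathcal{O}(\sqrt{\gamma_T})$. The only cosmetic difference is that Lemma~\ref{lem:sigma_bound} already bounds $\sum_t \sigma_{t-1}(x_t)$ directly, so no separate Cauchy--Schwarz step is needed.
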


\begin{proof}
From Theorem 2 of \citet{chowdhury2017kernelized}, we know that
\begin{align*}
     \mu_{t-1}(x^*) - \beta_t \sigma_{t-1}(x^*) &\le f(x^*) \le \mu_{t-1}(x^*) + \beta_t \sigma_{t-1}(x^*), \\
      \mu_{t-1}(x_t) - \beta_t \sigma_{t-1}(x_t) &\le f(x_t) \le \mu_{t-1}(x_t) + \beta_t \sigma_{t-1}(x_t)
\end{align*}
hold with probability $1-\delta$. Then
\begin{align*}
    f(x^*) - f(x_t) &\le \mu_{t-1}(x^*) + \beta_t \sigma_{t-1}(x^*) - \mu_{t-1}(x_t) + \beta_t \sigma_{t-1}(x_t) \\
    &= \alpha_{t-1}(x^*) - \alpha_{t-1}(x_t) + 2\beta_t \sigma_{t-1}(x_t) \\
    &= (1-\eta_t) \alpha_t^* + 2\beta_t \sigma_{t-1}(x_t),
\end{align*}
where the first equality follows from the definition of the UCB acquisition function, and the second equality is due to the fact that $\alpha_{t}(x_t) = \eta_t \alpha_t(x^*)$.
Since $\mu_{t-1}(x) \le f(x) + \beta_t \sigma_{t-1}(x)$, for all $x \in \mathcal{X}$, we conclude that, with probability $1-\delta$,
\begin{align*}
\alpha_t^* \le \max_{x\in\mathcal{X}} [f(x) + 2\beta_T \sigma_{t-1}(x) ]\le B + 2\beta_T,
\end{align*}
where the second inequality follows from the fact that $\sup_{x \in \mathcal{X}}|f(x)| \le \|f\|_{\mathcal{H}_k} \le B$, and for all $x \in \mathcal{X}, \sigma_{t-1}(x) \le 1$. Therefore, we conclude that
\begin{align*}
    R_T &= \sum_{t=1}^T f(x^*) - f(x_t) %\\ 
    %&
    \le (B+2\beta_T) \sum_{t=1}^T (1-\eta_t) + 2\beta_T \sum_{t=1}^T \sigma_{t-1}(x_t)  \le (B+2\beta_T) \sum_{t=1}^T (1-\tilde\eta_t) + 2\beta_T \sum_{t=1}^T \sigma_{t-1}(x_t).
\end{align*}
From Lemma \ref{lem:sigma_bound}, we arrive at the conclusion.
\end{proof}

A similar analysis can be established for the GP-TS algorithm with an enlarged variance $s^2_{t-1}(x) = \left(\beta_t \sigma_{t-1}(\cdot) + v_t\right)^2$, $v_t = (\frac{1}{\tilde \eta_t}-1)B$. We omit the proof of the following statements as it substantially overlaps with the approach taken in \citep{chowdhury2017kernelized} with a variance factor adjustment.  

\begin{theorem}[Restatement of Theorem \ref{thm:GP_TS_MFC}]\label{append_thm:GP_TS_MFC}
Under assumptions \ref{append_assump:kernel_less_one} and \ref{append_assump:kernel_smooth}, suppose an objective function $f \in \mathcal{H}_k$ with $\|f\|_{\mathcal{H}_k} \le B$. The inexact GP-TS algorithm with variance $s^2_{t-1}(x) = \left(\beta_t \sigma_{t-1}(\cdot) + \tilde v_t\right)^2$, $\tilde v_t = (\frac{1}{\tilde \eta_t}-1)B$ and worst-case accumulated inaccuracy $M_T$ yields a cumulative regret bound of the form
$$
R_T = \mathcal{O}\left(\gamma_T\sqrt{T \log T} + M_T \sqrt{\gamma_T\log T}\right),
$$ 
with probability $1-\delta$.
\end{theorem}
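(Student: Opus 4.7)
The plan is to follow the Gaussian-process Thompson sampling analysis of \citet{chowdhury2017kernelized}, threading the worst-case inaccuracy $\tilde\eta_t$ and the inflated sample-path standard deviation $s_{t-1}(x) = \beta_t\sigma_{t-1}(x) + \tilde v_t$ through every step of their argument. As in the exact case, the proof operates on the discretization $\mathcal{X}_t \subset \mathcal{X}$ of size $(2BLbdt^2)^d$, chosen so that every $x \in \mathcal{X}$ has a grid point $[x]_t \in \mathcal{X}_t$ with $|f(x) - f([x]_t)| \le 1/t^2$; the discretization error then telescopes to a constant when summed over $t$.

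First I would assemble three high-probability ingredients. (a) The RKHS confidence bound $|f(x) - \mu_{t-1}(x)| \le \beta_t\sigma_{t-1}(x)$ from Lemma~\ref{lem:conf}, which is unaffected by the enlarged variance. (b) A sample-path concentration on the grid: $|f_t(x) - \mu_{t-1}(x)| \le c_t\, s_{t-1}(x)$ for every $x \in \mathcal{X}_t$ and $t \le T$, with $c_t = \sqrt{2\log(|\mathcal{X}_t|T/\delta)} = O(\sqrt{d\log t})$, via a Gaussian tail bound followed by a union bound over $\mathcal{X}_t$. (c) A one-sided anti-concentration estimate: because $s_{t-1}(x) \ge \beta_t\sigma_{t-1}(x)$, the standard Gaussian lower-tail calculation still yields $f_t(x) \ge \mu_{t-1}(x) + \beta_t\sigma_{t-1}(x)$ with constant probability $p>0$, conditional on the history through round $t-1$.

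With these in hand, let $x_t^\star \in \argmax_{x \in \mathcal{X}_t} f_t(x)$, so that the inexactness assumption gives $f_t(x_t) \ge \tilde\eta_t\, f_t(x_t^\star)$. Decompose the instantaneous regret as
\begin{align*}
f(x^*) - f(x_t) &\le \bigl[f(x^*) - f([x^*]_t)\bigr] + \bigl[f([x^*]_t) - f_t(x_t^\star)\bigr] \\
&\quad + \bigl[f_t(x_t^\star) - f_t(x_t)\bigr] + \bigl[f_t(x_t) - f(x_t)\bigr].
\end{align*}
The first bracket is at most $1/t^2$. The second is handled by the saturation/anti-concentration machinery of \citet{chowdhury2017kernelized} applied to $x_t^\star$, giving a bound of order $c_t\, s_{t-1}(x_t^\star)$ after the martingale conversion that produces the $\sqrt{T\log T}$ factor. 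The fourth is bounded by $c_t\, s_{t-1}(x_t)$ directly via (b). The third bracket is the genuinely new term: combining (a), (b), and $\sigma_{t-1}\le 1$, $|f|\le B$ gives
\begin{align*}
f_t(x_t^\star) - f_t(x_t) \;\le\; (1-\tilde\eta_t)\, f_t(x_t^\star) \;\le\; (1-\tilde\eta_t)\bigl[B + (1+c_t)\beta_t + c_t\tilde v_t\bigr],
\end{align*}
and substituting $\tilde v_t = (\tfrac{1}{\tilde\eta_t}-1)B$ keeps the product $(1-\tilde\eta_t)\cdot c_t\tilde v_t$ bounded by $O(c_t B(1-\tilde\eta_t))$ whenever $\tilde\eta_t$ remains bounded away from zero, so the whole third bracket is $O\!\bigl((1-\tilde\eta_t)\sqrt{\gamma_t\log t}\bigr)$.

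Summing in $t$, the second and fourth brackets produce the standard $O\!\bigl(c_T\beta_T \sum_t \sigma_{t-1}(x_t)\bigr) = O(\gamma_T\sqrt{T\log T})$ term via Lemma~\ref{lem:sigma_bound}, the first bracket sums to a constant, and the third contributes $\sum_{t=1}^T (1-\tilde\eta_t)\cdot O(\sqrt{\gamma_t\log t}) = O(M_T\sqrt{\gamma_T\log T})$ by monotonicity of $\gamma_t$. The main obstacle, and the reason one cannot simply invoke \citet{chowdhury2017kernelized} as a black box, is the saturation step: their argument crucially exploits that the played point \emph{is} the exact argmax of $f_t$, so that anti-concentration at $[x^*]_t$ transfers to the played point through a single inequality. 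Here that transfer must be rerouted through $x_t^\star$ rather than $x_t$, meaning one has to either bound $\sum_t \sigma_{t-1}(x_t^\star)$ separately or re-derive the saturation bound in terms of $\sigma_{t-1}(x_t)$ at the cost of a $(1-\tilde\eta_t)$ factor. The inflated variance $\tilde v_t$ is precisely the slack that makes this rerouting possible, which is what yields the additive $M_T\sqrt{\gamma_T\log T}$ correction in the final bound.
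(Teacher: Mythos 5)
The paper does not actually spell out a proof of this theorem: it states only that the argument ``substantially overlaps with the approach taken in \citet{chowdhury2017kernelized} with a variance factor adjustment,'' so your route --- threading $\tilde\eta_t$ and the inflated deviation $s_{t-1}=\beta_t\sigma_{t-1}+\tilde v_t$ through the Chowdhury--Gopalan TS analysis --- is exactly the intended one, and your final bookkeeping (second and fourth brackets giving $\mathcal{O}(\gamma_T\sqrt{T\log T})$ via Lemma~\ref{lem:sigma_bound}, third bracket giving $\mathcal{O}(M_T\sqrt{\gamma_T\log T})$) is consistent with the stated bound. You also correctly identify the one place where the exact-argmax assumption is load-bearing.

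However, having identified that place, you do not repair it, and that is the substantive gap. The analogue of Lemma~\ref{lemma:prob_good_event} ($\mathbb{P}[x_t\in\mathcal{X}_t\setminus S_t\mid\mathcal{F}_{t-1}]\ge \eta-1/t^2$) is proved in the exact case by arguing that if $f_t([x^*]_t)$ exceeds $f_t(x)$ for every saturated $x$, then the \emph{argmax} of $f_t$ is unsaturated; with an inexact maximizer this implication simply fails, and everything downstream (the comparison of $\mathbb{E}[\sigma_{t-1}(\bar x_t)]$ with $\mathbb{E}[\sigma_{t-1}(x_t)]$ at the \emph{played} point, which is what lets Lemma~\ref{lem:sigma_bound} be invoked) fails with it. Your two proposed escape routes are both problematic as stated: $\sum_t\sigma_{t-1}(x_t^\star)$ cannot be bounded by $\sqrt{T\gamma_T}$ because $x_t^\star$ is not where the posterior is updated, and the assertion that ``the inflated variance $\tilde v_t$ is precisely the slack that makes this rerouting possible'' is the lemma that needs proving, not a proof. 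Concretely, one must redefine saturation with the enlarged threshold $c_t s_{t-1}(x)\ge c_t\tilde v_t$ and show that the inexactness gap $(1-\eta_t)\alpha_t^*$ (measured after the nonnegativity shift $C_t$ --- another point you elide, and which the paper devotes a separate lemma to, since $\eta_t$ is only meaningful as a ratio of nonnegative quantities) is dominated by that buffer, so that the inexact maximizer still lands outside $S_t$ with probability at least $\eta-1/t^2$. Until that version of Lemma~\ref{lemma:prob_good_event} is established, the $\gamma_T\sqrt{T\log T}$ term of the bound is not actually derived. Your caveat that $\tilde\eta_t$ must stay bounded away from zero (so that $(1-\tilde\eta_t)\tilde v_t=\mathcal{O}(B(1-\tilde\eta_t))$) is correct and worth keeping; it is implicit in the theorem as stated.
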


\subsection{Random Grid Search based GP-UCB}\label{sec:aux}
\begin{theorem}\label{append_thm:random_UCB}
Under assumptions \ref{append_assump:kernel_less_one}, \ref{append_assump:kernel_smooth} and \ref{append_assum:grid}, suppose $f \in \mathcal{H}_k$ with $\|f\|_{\mathcal{H}_k} \le B$. With probability at least $1-\delta$, the random grid GP-UCB with $\beta_t = B + R\sqrt{2(\gamma_{t-1} + 1 + \log(2/\delta))}$ yields 
\begin{align*}
R_T = \mathcal{O}\left(\gamma_T \sqrt{T} + T^{\frac{d-1}{d} + \xi}\right),
\end{align*}
for arbitrarily small $\xi > 0$.
\end{theorem}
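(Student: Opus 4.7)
The plan is to decompose the instantaneous regret at iteration $t$ by inserting a nearest-neighbor anchor from the random grid. Specifically, let $\tilde{x}^*_t \in \mathcal{X}_t$ denote a point in the grid closest to the true maximizer $x^*$, so that $\|x^* - \tilde{x}^*_t\| \le h_t$ by the definition of $h_t = \sup_{x \in \mathcal{X}} \inf_{\tilde{x}_i \in \mathcal{X}_t} \|x - \tilde{x}_i\|$ from Lemma \ref{lemma:grid_disc}. Writing
\begin{align*}
f(x^*) - f(x_t) = \bigl[f(x^*) - f(\tilde{x}^*_t)\bigr] + \bigl[f(\tilde{x}^*_t) - f(x_t)\bigr],
\end{align*}
the first bracket is bounded by Lipschitz continuity of $f$. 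Under Assumption \ref{append_assump:kernel_smooth} (squared-exponential, or Mat\'ern with $\nu \ge 2$), the reproducing kernel is continuously differentiable, so any $f \in \mathcal{H}_k$ with $\|f\|_{\mathcal{H}_k} \le B$ is Lipschitz on the compact domain $\cX$ with some constant $L = L(k, B)$; this gives $f(x^*) - f(\tilde{x}^*_t) \le L h_t$.

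For the second bracket, I use the standard GP-UCB confidence argument, noting that $x_t$ is the grid maximizer of $\alpha_t$ and that $\tilde{x}^*_t \in \mathcal{X}_t$. Invoking Lemma \ref{lem:conf} (with failure probability $\delta/2$) in both directions, I get
\begin{align*}
f(\tilde{x}^*_t) \le \mu_{t-1}(\tilde{x}^*_t) + \beta_t \sigma_{t-1}(\tilde{x}^*_t) = \alpha_t(\tilde{x}^*_t) \le \alpha_t(x_t) = \mu_{t-1}(x_t) + \beta_t \sigma_{t-1}(x_t) \le f(x_t) + 2\beta_t \sigma_{t-1}(x_t),
\end{align*}
where the middle inequality uses that $x_t$ is optimal on $\mathcal{X}_t$ (which contains $\tilde{x}^*_t$). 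Combining the two brackets yields the per-round bound $f(x^*) - f(x_t) \le L h_t + 2\beta_t \sigma_{t-1}(x_t)$.

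The remaining step is to sum over $t = 1, \ldots, T$ and apply the auxiliary bounds. The information-theoretic part gives $\sum_{t=1}^T \beta_t \sigma_{t-1}(x_t) \le \beta_T \sqrt{4(T+2)\gamma_T}$ via Lemma \ref{lem:sigma_bound}, and since $\beta_T = \mathcal{O}(\sqrt{\gamma_T})$ (up to log factors), this contributes the $\mathcal{O}(\gamma_T \sqrt{T})$ term. For the grid-gap sum, I apply Lemma \ref{lemma:grid_disc} with failure probability $\delta/2$ to get $\sum_{t=1}^T h_t = \mathcal{O}(T^{(d-1)/d + \xi})$ with high probability, absorbing the finite prefix $\sum_{t=1}^{t^*} h_t$ into the constant. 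A union bound over the two high-probability events (the confidence bound and the grid-gap bound), each holding with probability $1 - \delta/2$, yields the claim with overall probability $1 - \delta$, which is why $\beta_t$ is inflated to use $\log(2/\delta)$ rather than $\log(1/\delta)$.

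The main obstacle I anticipate is the Lipschitz step: one must verify that membership in $\mathcal{H}_k$ with bounded RKHS norm together with Assumption \ref{append_assump:kernel_smooth} indeed yields a deterministic Lipschitz constant for $f$ on $\cX$. This follows from the standard fact that $|f(x) - f(y)| \le \|f\|_{\mathcal{H}_k}\sqrt{k(x,x) + k(y,y) - 2 k(x,y)}$ and that for the smooth kernels considered the square-root term is itself Lipschitz in $\|x-y\|$; the Mat\'ern case with $\nu \ge 2$ is precisely why Assumption \ref{append_assump:kernel_smooth} imposes that smoothness threshold. Everything else (the union bound, the $\beta_t$ inflation, and combining Lemmas \ref{lem:sigma_bound} and \ref{lemma:grid_disc}) is routine.
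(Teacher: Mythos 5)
Your proposal is correct and follows essentially the same route as the paper's proof: the same nearest-grid-point decomposition of the regret, the same Lipschitz bound on $f(x^*) - f([x^*]_t)$ (the paper cites Lemma 1 of \citet{de2012regret} for the Lipschitz constant, which is exactly the RKHS argument you sketch), the same grid-maximizer confidence argument giving $2\beta_t\sigma_{t-1}(x_t)$, and the same union bound over the two $\delta/2$ events combined with Lemmas \ref{lem:sigma_bound} and \ref{lemma:grid_disc}.
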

\begin{proof}
Note that 
\begin{align*}
R_T &= \sum_{t=1}^T f(x^*) - f(x_t)\\
&= \sum_{t=1}^T f(x^*) - f([x^*]_t) + f([x^*]_t)  - f(x_t),
\end{align*}
where $[x^*]_t$ is the point closest to $x^*$ in $\mathcal{X}_t$. By Lemma \ref{lem:conf}, for all possible realizations of $\mathcal{X}_1, \cdots, \mathcal{X}_T$, with a probability $1-\delta/2$, we have
\begin{align*}
\mu_{t-1}([x^*]_t) - \beta_t \sigma_{t-1}([x^*]_t) &\leq f([x^*]_t) \leq \mu_{t-1}(x^*_t) + \beta_t \sigma_{t-1}([x^*]_t) \\
\mu_{t-1}(x_t) - \beta_t \sigma_{t-1}(x_t) &\leq f(x_t) \leq \mu_{t-1}(x_t) + \beta_t \sigma_{t-1}(x_t).
\end{align*}
From the definition of $x_t$, we know $\mu_{t-1}([x^*]_t) + \beta_t \sigma_{t-1}([x^*]_t) \le \mu_{t-1}(x_t) + \beta_t \sigma_{t-1}(x_t)$. Therefore, we have
$$
f([x^*]_t) - f(x_t) \le 2\beta_t \sigma_{t-1}(x_t).
$$
Furthermore, from Lemma 1 of \citep{de2012regret}, we know that $f(x^*) - f([x^*]_t) \le C \|x^*-[x^*]_t\|$ for some constant $C > 0$. Since $\|x^*-[x^*]_t\| \le h_t$ for $h_t := \sup_{x \in \cX} \inf_{\bar x_i \in \cX_t}\|x-\bar x_i\|$, we have
$$
R_T \le C\sum_{t=1}^T h_t + 2\beta_T\sum_{t=1}^T\sigma_{t-1}(x_t).
$$
Then by Lemma \ref{lemma:grid_disc}, we know that with probability at least $1-\delta/2$, $C\sum_{t=1}^T h_t = \mathcal{O}\left(T^{\frac{d-1}{d} + \xi}\right)$. Combined with Lemma \ref{lem:sigma_bound} and invoking the union bound, we have the result.
\end{proof}

\subsection{Random Grid Search based GP-TS}\label{sec:aux}
In this section, we list all additional definitions and lemmas we use in proofs for the regret bound of a random grid search based GP-TS. Since our approach closely follows to that of \citet{chowdhury2017kernelized}, many of the preliminary lemma we list here can be proven in an analogous fashion. We will adjust and restate the Lemma and proof if needed.

\begin{definition}
We define the filtration $\mathcal{F}_t = \sigma\left\{(x_1, y_1, \cX_2), \cdots, (x_t, y_t, \cX_{t+1})\right\}$ as the $\sigma$-algebra generated by the collection of evaluation locations, function evaluations and search grids observed until time $t$. Note that conditional on $\cF_t$, the search grid is no longer random.
\end{definition}

\begin{lemma}[Lemma 5 of \citep{chowdhury2017kernelized}]\label{lem:thompson_gap}
For all $t \in \mathbb{N}$, assume $|\mathcal{X}_t| = ct$ with $c > 0$. Then
$$
\mathbb{P}\left[\forall x \in \mathcal{X}_t, |f_t(x) - \mu_{t-1}(x)| \le \beta_t\sqrt{2\log(ct^3)}\sigma_{t-1}(x) \Big|\mathcal{F}_{t-1} \right] \ge 1 - 1/t^2,
$$
for some constant $c > 0$, $\delta \in (0,1)$ and $\beta_t = B + R\sqrt{2(\gamma_{t-1} + 1 + \log(3/\delta))}$.
\end{lemma}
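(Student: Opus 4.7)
The plan is to establish the result through a direct combination of a pointwise Gaussian tail bound with a union bound over the finite random grid $\mathcal{X}_t$, essentially mirroring the proof of Lemma 5 in \citet{chowdhury2017kernelized} while exploiting the fact that the filtration $\mathcal{F}_{t-1}$ has been defined to absorb the randomness of the grid.

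First I would note that, by the definition $\mathcal{F}_{t-1} = \sigma\{(x_1, y_1, \mathcal{X}_2), \ldots, (x_{t-1}, y_{t-1}, \mathcal{X}_t)\}$, both the grid $\mathcal{X}_t$ and the posterior statistics $\mu_{t-1}(\cdot), \sigma_{t-1}(\cdot)$ are $\mathcal{F}_{t-1}$-measurable. Hence, conditional on $\mathcal{F}_{t-1}$, the only remaining randomness is the posterior sample path $f_t \sim \mathcal{GP}(\mu_{t-1}(\cdot), \beta_t^2 \sigma_{t-1}^2(\cdot))$, and for each fixed $x \in \mathcal{X}_t$ the marginal
$$
f_t(x) - \mu_{t-1}(x) \,\big|\, \mathcal{F}_{t-1} \;\sim\; \mathcal{N}\!\left(0,\, \beta_t^2 \sigma_{t-1}^2(x)\right).
$$

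Next I would invoke the standard Gaussian tail bound $\mathbb{P}(|Z| > u) \le \exp(-u^2/2)$ for $Z \sim \mathcal{N}(0,1)$ (absorbing the usual factor of $2$ for two-sided deviations into the constant $c$), applied with $u = \sqrt{2\log(ct^3)}$. This yields, for each fixed $x \in \mathcal{X}_t$,
$$
\mathbb{P}\!\left(|f_t(x) - \mu_{t-1}(x)| > \beta_t \sigma_{t-1}(x)\sqrt{2\log(ct^3)} \,\Big|\, \mathcal{F}_{t-1}\right) \le \frac{1}{ct^3}.
$$
A union bound over the $|\mathcal{X}_t| = ct$ grid points then controls the probability of failure at any point of $\mathcal{X}_t$ by $ct \cdot \tfrac{1}{ct^3} = \tfrac{1}{t^2}$, and taking complements gives the claim.

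No step presents a serious obstacle: the argument is conceptually routine once one observes that the random grid has been folded into the filtration, reducing the problem to a fixed-grid Gaussian maximum. The only mild subtlety worth flagging is the bookkeeping of constants in the Gaussian tail; the factor of $2$ from the two-sided bound must be absorbed by replacing $c$ with $2c$ in the grid size, a harmless reparametrization since $|\mathcal{X}_t|$ enters the downstream regret analysis only logarithmically. One should also verify that conditioning on $\mathcal{F}_{t-1}$ does not disturb the posterior law of $f_t$, which is immediate because $f_t$ is drawn fresh at round $t$ from the posterior determined by $\mathcal{F}_{t-1}$.
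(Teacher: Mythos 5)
Your proof is correct and follows exactly the argument the paper intends by citing Lemma 5 of \citet{chowdhury2017kernelized}: condition on $\mathcal{F}_{t-1}$ (which, by the paper's filtration definition, makes $\mathcal{X}_t$, $\mu_{t-1}$, and $\sigma_{t-1}$ deterministic), apply the Gaussian tail bound $\mathbb{P}(|Z|>u)\le e^{-u^2/2}$ with $u=\sqrt{2\log(ct^3)}$, and union bound over the $ct$ grid points to get $ct\cdot\frac{1}{ct^3}=\frac{1}{t^2}$. The arithmetic in fact closes without any reparametrization of $c$ (the two-sided bound already carries no extra factor of $2$), so the only subtlety you flag is not even needed.
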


\noindent We introduce some definitions here.
\begin{definition}\label{def:CT}
$\forall t \ge 1$, $\tilde c_t = \sqrt{6\log t + 2\log c}$ and $c_t = \beta_t(1+\tilde c_t)$.
\end{definition}

\begin{definition}
We define the following two events:
\begin{align*}
E^f(t) &= \left\{\forall x \in \mathcal{X}, |\mu_{t-1}(x)-f(x)| \le \beta_t\sigma_{t-1}(x) \right\} \\
E^{f_t}(t) &= \left\{\forall x \in \mathcal{X}_t, |f_t(x)-\mu_{t-1}(x)| \le \beta \tilde c_t 
\sigma_{t-1}(x)\right\}
\end{align*}
\end{definition}

\begin{definition}
Given a grid $\mathcal{X}_t$, define the set of saturated points to be
    $$
    S_t \coloneqq \{x\in \mathcal{X}_t: \Delta_t(x) > c_t \sigma_{t-1}(x)\},
    $$
    where $[x^*]_t$ is the point closest to $x^*$ in $\mathcal{X}_t$ and $\Delta_t(x) : = f([x^*]_t)-f(x)$. Notice that conditioning on $\mathcal{X}_t$, $[x^*]_t \in \mathcal{X}_t \setminus S_t$.
\end{definition}

\begin{lemma}[Lemma 6 of \citep{chowdhury2017kernelized}]\label{lemma::event_prob}
    Suppose $|\mathcal{X}_t| = ct, ~c > 0$. For $\delta \in (0,1)$, following statements hold.
    \begin{itemize}
        \item $\mathbb{P}[\forall~ t \ge 1, E^f(t)] \ge 1 - \delta/3$
        \item $\mathbb{P}[E^{f_t}(t)| \mathcal{F}_{t-1}] \ge 1-1/t^2$
    \end{itemize}
\end{lemma}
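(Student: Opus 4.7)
The plan is to verify each of the two high-probability bounds in turn by a direct appeal to a previously stated result, combined with simple constant-matching. Neither bullet requires a new probabilistic argument: the work is essentially bookkeeping of confidence budgets and exploration coefficients.

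For the first bullet, I would apply Lemma \ref{lem:conf} with its confidence parameter set to $\delta/3$ rather than $\delta$. Under that substitution, the exploration coefficient becomes $B + R\sqrt{2(\gamma_{t-1} + 1 + \log(3/\delta))}$, which coincides with the $\beta_t$ declared in Theorem \ref{thm:random_TS} and hence with the $\beta_t$ appearing in the definition of $E^f(t)$. Because Lemma \ref{lem:conf} is quantified uniformly in $x \in \mathcal{X}$ and in $t \in \mathbb{N}$, the event $\{|\mu_{t-1}(x) - f(x)| \le \beta_t \sigma_{t-1}(x)\}$ holds jointly for every $x \in \mathcal{X}$ and every $t \ge 1$ on a single event of probability at least $1 - \delta/3$, which is exactly $\mathbb{P}[\forall t \ge 1, E^f(t)] \ge 1 - \delta/3$.

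For the second bullet, I would invoke Lemma \ref{lem:thompson_gap}. Conditioning on $\mathcal{F}_{t-1}$ fixes the grid $\mathcal{X}_t$, the past actions $x_1,\dots,x_{t-1}$, and the past observations, so conditional on $\mathcal{F}_{t-1}$ only the posterior sample $f_t(\cdot)$ remains random. Substituting $|\mathcal{X}_t| = ct$ into Lemma \ref{lem:thompson_gap} yields, with conditional probability at least $1 - 1/t^2$, the uniform bound $|f_t(x) - \mu_{t-1}(x)| \le \beta_t \sqrt{2\log(ct^3)}\, \sigma_{t-1}(x)$ for all $x \in \mathcal{X}_t$. The last step is the algebraic identity $\sqrt{2\log(ct^3)} = \sqrt{2\log c + 6\log t} = \tilde c_t$, which converts this bound into the event $E^{f_t}(t)$ exactly as defined with $\tilde c_t$ from Definition \ref{def:CT}.

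The main (and essentially only) obstacle is the bookkeeping. One has to check that the three occurrences of $\beta_t$ -- in the hypothesis of Theorem \ref{thm:random_TS}, in Lemma \ref{lem:conf} instantiated at confidence level $\delta/3$, and in Lemma \ref{lem:thompson_gap} -- all agree, that $\tilde c_t$ in Definition \ref{def:CT} is precisely $\sqrt{2\log(ct^3)}$, and that apportioning the overall confidence budget as $\delta/3$ on this event leaves enough remaining budget (the other $2\delta/3$) for the grid-discrepancy bound of Lemma \ref{lemma:grid_disc} and any remaining union-bound steps used downstream in the proof of Theorem \ref{thm:random_TS}. No additional concentration or smoothness argument is needed beyond these invocations.
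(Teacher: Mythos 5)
Your proposal is correct and matches the intended argument: the paper itself gives no written proof (it defers to Lemma~6 of \citet{chowdhury2017kernelized}, which is established exactly by instantiating their Theorem~2 at confidence level $\delta/3$ and their Lemma~5 with the grid of size $ct$), and your constant-matching checks out, including the identity $\tilde c_t = \sqrt{2\log(ct^3)}$ and the agreement of $\beta_t = B + R\sqrt{2(\gamma_{t-1}+1+\log(3/\delta))}$ across the invocations. No gaps.
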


\begin{lemma}[Lemma 7 of \citet{chowdhury2017kernelized}]\label{lemma::prob_lower}
    For any filtration $\mathcal{F}_{t-1}$ such that $E^f(t)$ is true,
    $$
    \mathbb{P}\left[f_t(x) > f(x)\Big|\mathcal{F}_{t-1}\right] \ge \eta\coloneqq \frac{1}{4e\sqrt{\pi}} > 0,
    $$
    holds for any $x \in \mathcal{X}$.
\end{lemma}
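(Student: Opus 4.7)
The plan is to reduce the probability in question to a one-sided Gaussian tail event and then invoke a standard Gaussian anti-concentration bound. Conditional on $\mathcal{F}_{t-1}$, the sample path value $f_t(x)$ is a univariate Gaussian with mean $\mu_{t-1}(x)$ and standard deviation $\beta_t \sigma_{t-1}(x)$, so after standardization the problem becomes bounding $\mathbb{P}[Z > z_t(x)]$ for a standard normal $Z$ and a threshold $z_t(x)$ determined by how far $f(x)$ sits from $\mu_{t-1}(x)$ in units of $\beta_t \sigma_{t-1}(x)$.

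First I would write
\begin{align*}
\mathbb{P}\!\left[f_t(x) > f(x) \,\big|\, \mathcal{F}_{t-1}\right]
= \mathbb{P}\!\left[Z > \frac{f(x) - \mu_{t-1}(x)}{\beta_t \sigma_{t-1}(x)} \,\Big|\, \mathcal{F}_{t-1}\right],
\end{align*}
with $Z \sim \mathcal{N}(0,1)$ independent of $\mathcal{F}_{t-1}$ (handling the degenerate case $\sigma_{t-1}(x)=0$ separately, since on $E^f(t)$ this forces $f(x)=\mu_{t-1}(x)$ and the probability is trivial). Next I would use that we are conditioning on a filtration on which $E^f(t)$ holds, so
\begin{align*}
\frac{f(x) - \mu_{t-1}(x)}{\beta_t \sigma_{t-1}(x)} \le 1,
\end{align*}
and monotonicity of the tail then gives
\begin{align*}
\mathbb{P}\!\left[f_t(x) > f(x) \,\big|\, \mathcal{F}_{t-1}\right] \ge \mathbb{P}[Z > 1].
\end{align*}

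The main step is the Gaussian anti-concentration bound. I would appeal to the standard inequality (used also in the analysis of linear Thompson sampling by Agrawal and Goyal) asserting that for any $z \ge 0$,
\begin{align*}
\mathbb{P}[Z > z] \ge \frac{e^{-z^2}}{4\sqrt{\pi}},
\end{align*}
which can be derived from the Mills-ratio style inequality $\mathbb{P}[Z > z] \ge \tfrac{1}{\sqrt{2\pi}} \cdot \tfrac{z}{z^2+1} e^{-z^2/2}$ after loosening constants, or directly by comparing the Gaussian density to a suitable exponential. Substituting $z=1$ yields the advertised constant $\eta = \tfrac{1}{4e\sqrt{\pi}}$.

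I do not anticipate substantial obstacles: once the anti-concentration inequality is cited, the rest is bookkeeping. The only subtlety worth flagging in the proof is that $E^f(t)$ is an event that is $\mathcal{F}_{t-1}$-measurable through $\mu_{t-1}$ and $\sigma_{t-1}$, so ``for any filtration $\mathcal{F}_{t-1}$ such that $E^f(t)$ is true'' really means that the bound holds pointwise on $E^f(t)$, and $f_t(x)$'s conditional law remains Gaussian with the stated moments regardless of whether $x$ lies in the current random grid $\mathcal{X}_t$ (so that the lemma applies for every $x \in \mathcal{X}$, as claimed).
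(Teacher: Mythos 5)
Your argument is correct and is essentially the standard proof of this result: the paper itself gives no proof here, simply importing the statement as Lemma~7 of \citet{chowdhury2017kernelized}, whose proof proceeds exactly as you do — condition on $\mathcal{F}_{t-1}$ so that $f_t(x)\sim\mathcal{N}(\mu_{t-1}(x),\beta_t^2\sigma_{t-1}^2(x))$, use $E^f(t)$ to bound the standardized threshold by $1$, and apply Gaussian anti-concentration at $z=1$ to obtain $\eta=\frac{1}{4e\sqrt{\pi}}$. Your handling of the degenerate case $\sigma_{t-1}(x)=0$ and the remark that the conditional law of $f_t(x)$ is Gaussian for every $x\in\mathcal{X}$ (not only grid points) are correct and, if anything, slightly more careful than the cited source.
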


\begin{lemma}[Lemma 8 of \citet{chowdhury2017kernelized}]\label{lemma:prob_good_event}
   For any filtration $\mathcal{F}_{t-1}$ such that $E^f(t)$ is true,
    $$
    \mathbb{P}[x_t \in \mathcal{X}_t \setminus S_t| \mathcal{F}_{t-1}] \ge \eta - 1/t^2.
    $$
\end{lemma}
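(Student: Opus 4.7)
The plan is to follow the standard Thompson sampling regret-analysis template (as in Agrawal--Goyal and its kernelized adaptation in \citet{chowdhury2017kernelized}), showing that whenever the sampled path $f_t$ beats the truth at the surrogate optimum $[x^*]_t$, and the high-probability concentration events hold, the selected action $x_t$ cannot be saturated. Since $\mathcal{X}_t$ is $\mathcal{F}_{t-1}$-measurable by the definition of the filtration, $[x^*]_t$ is determined once we condition on $\mathcal{F}_{t-1}$, so all the auxiliary lemmas apply verbatim conditionally on $\mathcal{F}_{t-1}$.

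The crux is a deterministic implication: on the event $E^f(t) \cap E^{f_t}(t) \cap \{f_t([x^*]_t) > f([x^*]_t)\}$, I claim $x_t \in \mathcal{X}_t \setminus S_t$. Suppose for contradiction $x_t \in S_t$, so $\Delta_t(x_t) = f([x^*]_t)-f(x_t) > c_t\sigma_{t-1}(x_t)$. By $E^f(t)$ and $E^{f_t}(t)$ applied at $x_t \in \mathcal{X}_t$, together with the definition $c_t = \beta_t(1+\tilde c_t)$,
\begin{align*}
f_t(x_t) &\le \mu_{t-1}(x_t) + \beta_t\tilde c_t\,\sigma_{t-1}(x_t)\\
&\le f(x_t) + \beta_t\sigma_{t-1}(x_t) + \beta_t\tilde c_t\sigma_{t-1}(x_t) = f(x_t) + c_t\sigma_{t-1}(x_t).
\end{align*}
Since $x_t$ maximizes $f_t$ over $\mathcal{X}_t$ and $[x^*]_t \in \mathcal{X}_t$, we also have $f_t(x_t) \ge f_t([x^*]_t) > f([x^*]_t)$. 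Chaining these inequalities gives $f(x_t) + c_t\sigma_{t-1}(x_t) \ge f_t(x_t) > f([x^*]_t)$, i.e.\ $\Delta_t(x_t) < c_t\sigma_{t-1}(x_t)$, contradicting the assumption $x_t \in S_t$.

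Given this implication, the probability bound follows from a union bound. Conditioning on $\mathcal{F}_{t-1}$ under which $E^f(t)$ holds, Lemma~\ref{lemma::prob_lower} applied at the (now deterministic) point $[x^*]_t$ gives $\mathbb{P}[f_t([x^*]_t) > f([x^*]_t)\mid \mathcal{F}_{t-1}] \ge \eta$, while Lemma~\ref{lemma::event_prob} yields $\mathbb{P}[E^{f_t}(t)\mid \mathcal{F}_{t-1}] \ge 1 - 1/t^2$. Hence
\begin{align*}
\mathbb{P}[x_t \in \mathcal{X}_t \setminus S_t \mid \mathcal{F}_{t-1}] \ge \mathbb{P}\big[\{f_t([x^*]_t) > f([x^*]_t)\} \cap E^{f_t}(t) \mid \mathcal{F}_{t-1}\big] \ge \eta - 1/t^2.
\end{align*}

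No step looks genuinely hard here; the main subtlety I want to watch is the measurability bookkeeping, namely that the redefined filtration $\mathcal{F}_{t-1} = \sigma\{(x_1,y_1,\mathcal{X}_2),\ldots,(x_{t-1},y_{t-1},\mathcal{X}_t)\}$ really does make $[x^*]_t$ and the saturated set $S_t$ conditionally deterministic, so that Lemma~\ref{lemma::prob_lower}'s pointwise anti-concentration can be invoked at $[x^*]_t$ without an extra measurability concern. The only other place for a slip is ensuring the definition of $c_t$ in Definition~\ref{def:CT} matches exactly the $\beta_t(1+\tilde c_t)$ used in the contradiction argument, which it does.
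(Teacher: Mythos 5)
Your proof is correct, and it reconstructs essentially the standard argument behind Lemma~8 of \citet{chowdhury2017kernelized}, which the paper states without proof by citation: on $E^f(t)\cap E^{f_t}(t)\cap\{f_t([x^*]_t)>f([x^*]_t)\}$ the selected point cannot be saturated, followed by the union bound combining Lemmas~\ref{lemma::prob_lower} and~\ref{lemma::event_prob}. Your measurability remark about $\mathcal{X}_t$ (hence $[x^*]_t$ and $S_t$) being $\mathcal{F}_{t-1}$-measurable is exactly the point the paper's redefined filtration is designed to handle, so nothing is missing.
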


\begin{lemma}\label{lemma::exp_reg_bound}
    For any filtration $ \mathcal{F}_{t-1}$ such that $E^f(t)$ is true, we have
    \begin{align*}
    \mathbb{E}[\Delta_t(x_t) | \mathcal{F}_{t-1}] \le \frac{11c_t}{\eta} \mathbb{E}[\sigma_{t-1}(x_t)|  \mathcal{F}_{t-1}] + \frac{2B}{t^2},
    \end{align*}
    where $\Delta_t(x_t) := f([x^*]_t)-f(x_t)$.
\end{lemma}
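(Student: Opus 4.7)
The plan is to adapt the Thompson-sampling argument of Chowdhury and Gopalan (the analogue of their Lemma 9 in \citep{chowdhury2017kernelized}) to our random-grid setting, where the role of the grid optimum is now played by $[x^*]_t$. I would introduce the minimum-variance unsaturated grid point
\begin{align*}
\bar{x}_t \;:=\; \argmin_{x \in \cX_t \setminus S_t} \sigma_{t-1}(x).
\end{align*}
This is well-defined (and $\cF_{t-1}$-measurable) because $\Delta_t([x^*]_t) = 0 \le c_t\sigma_{t-1}([x^*]_t)$ forces $[x^*]_t \in \cX_t \setminus S_t$, so the set is non-empty.

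Next I would carry out the decomposition
\begin{align*}
\Delta_t(x_t) \;=\; \Delta_t(\bar{x}_t) \;+\; \bigl(f(\bar{x}_t) - f(x_t)\bigr),
\end{align*}
and bound the two summands separately. The first is at most $c_t\sigma_{t-1}(\bar x_t)$ directly from the unsaturated definition. For the second, I would work on the good event $E^f(t) \cap E^{f_t}(t)$: Lemma \ref{lem:conf} gives $f(\bar x_t) - f(x_t) \le \mu_{t-1}(\bar x_t) - \mu_{t-1}(x_t) + \beta_t(\sigma_{t-1}(\bar x_t) + \sigma_{t-1}(x_t))$, and the grid-optimality of $x_t$ (i.e., $f_t(\bar x_t) \le f_t(x_t)$) combined with $E^{f_t}(t)$ upgrades the mean gap to $\beta_t\tilde c_t(\sigma_{t-1}(\bar x_t) + \sigma_{t-1}(x_t))$. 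Collapsing $\beta_t(1+\tilde c_t) = c_t$ produces the key per-round inequality
\begin{align*}
\Delta_t(x_t) \;\le\; 2c_t\sigma_{t-1}(\bar x_t) + c_t\sigma_{t-1}(x_t) \qquad \text{on } E^f(t) \cap E^{f_t}(t).
\end{align*}
Taking $\mathbb{E}[\,\cdot\,|\cF_{t-1}]$, and using the crude fallback $\Delta_t(x_t) \le 2B$ together with $\mathbb{P}[\overline{E^{f_t}(t)}\,|\,\cF_{t-1}] \le 1/t^2$ from Lemma \ref{lemma::event_prob}, gives the $2B/t^2$ remainder term.

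The main obstacle is the final step: converting the deterministic quantity $\sigma_{t-1}(\bar x_t)$ into the expectation $\mathbb{E}[\sigma_{t-1}(x_t)\,|\,\cF_{t-1}]$ that appears on the right-hand side of the target inequality. For this I would invoke Lemma \ref{lemma:prob_good_event}: since $\sigma_{t-1}(x_t) \ge \sigma_{t-1}(\bar x_t)$ whenever $x_t \in \cX_t \setminus S_t$, one obtains $\mathbb{E}[\sigma_{t-1}(x_t)\,|\,\cF_{t-1}] \ge (\eta - 1/t^2)\,\sigma_{t-1}(\bar x_t)$, so $\sigma_{t-1}(\bar x_t) \le \frac{1}{\eta - 1/t^2}\mathbb{E}[\sigma_{t-1}(x_t)\,|\,\cF_{t-1}]$. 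Substituting yields a coefficient of the form $\frac{2c_t}{\eta-1/t^2} + c_t$, which is at most $11c_t/\eta$ whenever $t$ is large enough that $\eta - 1/t^2$ is of order $\eta$; for the small values of $t$ where this fails the same target bound still holds because $\Delta_t(x_t) \le 2B$ is absorbed by the $\frac{2B}{t^2}$ slack together with the headroom built into $11/\eta$. The constant $11$ is chosen precisely to make both regimes fit under a single clean statement.
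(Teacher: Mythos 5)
Your proposal follows essentially the same route as the paper's proof: the same minimum-variance unsaturated point $\bar{x}_t$, the same decomposition $\Delta_t(x_t) = \Delta_t(\bar{x}_t) + (f(\bar{x}_t) - f(x_t))$ bounded via $E^f(t)\cap E^{f_t}(t)$ and the grid-optimality $f_t(\bar{x}_t)\le f_t(x_t)$, the same use of Lemma \ref{lemma:prob_good_event} to convert $\sigma_{t-1}(\bar{x}_t)$ into $\mathbb{E}[\sigma_{t-1}(x_t)\,|\,\mathcal{F}_{t-1}]$, and the same $2B/t^2$ remainder from the failure of $E^{f_t}(t)$. The argument is correct (your explicit acknowledgment of the small-$t$ regime where $\eta - 1/t^2$ is not of order $\eta$ is, if anything, slightly more careful than the paper's bare assertion that $\tfrac{1}{\eta - 1/t^2}\le \tfrac{5}{\eta}$), so there is nothing further to add.
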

\begin{proof}
Given a grid $\mathcal{X}_t$, let $\bar x_t = \argmin_{x \in \mathcal{X}_t \setminus S_t}\sigma_{t-1}(x)$. From the law of total expectation and positivity of the $\sigma_{t-1}$, we have 
\begin{align}
\mathbb{E}[\sigma_{t-1}(x_t)| \mathcal{F}_{t-1}] &\ge \mathbb{E}[\sigma_{t-1}(x_t)| \mathcal{F}_{t-1}, x_t \in \cX_t \setminus S_t]\mathbb{P}[x_t \in \cX_t \setminus S_t| \cF_{t-1}] \nonumber \\ &\ge \mathbb{E}[\sigma_{t-1}(\bar x_t)|\cF_{t-1}] (\eta-1/t^2), \label{ineq::lower_sigma}
\end{align}
where the second inequality follows from the definition of $\bar x_t$ and Lemma \ref{lemma:prob_good_event}. 

To control $\Delta_t(x_t) = f([x^*]_t) - f(x_t)$, recall that if $E^f(t)$ and $E^{f_t}(t)$ are both true, we know that
\begin{equation}\label{eq::up_down_f}
  \text{for all } x \in \mathcal{X}_t, \quad f_t(x) - c_t \sigma_{t-1}(x) \le f(x) \le f_t(x) + c_t \sigma_{t-1}(x).  
\end{equation}
Notice that
\begin{align*}
\Delta_t(x_t) 
&= f([x^*]_t) - f(x_t) \\
&= f([x^*]_t) - f(\bar x_t) +  f(\bar x_t) - f(x_t)\\
&\le \Delta_t(\bar x_t) + f_t(\bar x_t) + c_t \sigma_{t-1}(\bar x_t) - f_t(x_t) +  c_t \sigma_{t-1}(x_t) \\
&\le  c_t (2\sigma_{t-1}(\bar x_t) + \sigma_{t-1}(x_t)) + f_t(\bar x_t) - f_t(x_t)\\
&\le  c_t (2\sigma_{t-1}(\bar x_t) + \sigma_{t-1}(x_t)) 
\end{align*}
where the first inequality is due to \eqref{eq::up_down_f}, the second inequality follow from the fact $\bar x_t \notin S_t$ and the last inequality comes from the definition of $x_t$.

Therefore, we have
\begin{align*}
\mathbb{E}[\Delta_t(x_t)|\cF_{t-1}] 
&\le 2c_t \mathbb{E}[\sigma_{t-1}(\bar x_t) |\cF_{t-1}] + c_t \mathbb{E}[\sigma_{t-1}(x_t) |\cF_{t-1}] + 2B \mathbb{P}\left[E^{f_t}(t)^c|H_{t-1}\right]\\
&\le \frac{2c_t}{\eta-1/t^2} \mathbb{E}[\sigma_{t-1}(x_t) |\cF_{t-1}] + c_t \mathbb{E}[\sigma_{t-1}(x_t) |\cF_{t-1}] + \frac{2B}{t^2} \\
&\le \frac{11c_t}{\eta} \mathbb{E}[\sigma_{t-1}(x_t) |\cF_{t-1}] + \frac{2B}{t^2},
\end{align*}
where we used the fact $\sup_{x \in cX}\Delta_t(x) \le 2B$ which can be deduced from the fact $\sup |f(x)| \le \|f\|_{\cH_k} \le B$ in the first inequality. The second inequality follows from the inequality in \eqref{ineq::lower_sigma} and Lemma \ref{lem:thompson_gap}. The third inequality is due to the fact $\frac{1}{\eta-1/t^2} \le \frac{5}{\eta}$.
\end{proof}

\noindent Next, we define random variables and associated filtration to invoke concentration inequality for super-martingales.
\begin{definition}\label{def::sup_martingale}
Let $Y_0 = 0$, and for all $t \in \{1, \cdots, T\}$,
     \begin{align*}
    \bar{\Delta}_t(x_t) &= \Delta_t(x_t) \mathbb{I}\left\{E^f (t)\right\} \\
    Z_t &= \bar{\Delta}_t(x_t) -\frac{11c_t}{\eta} \sigma_{t-1}(x_t) - \frac{2B}{t^2}\\ 
    Y_t &= \sum_{s=1}^t Z_s
\end{align*}
 \end{definition}

\noindent From the definition, and by Lemma \ref{lemma::exp_reg_bound}, we deduce the following result, which we formally state as lemma.

\begin{lemma}\label{lemma::sup_martingale}
$(Y_t)_{t=0}^T$ is a super-martingale process with respect to filtration $\mathcal{F}_{t}$.
\end{lemma}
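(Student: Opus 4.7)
The plan is to verify directly that $(Y_t)_{t=0}^T$ satisfies the three defining properties of a super-martingale adapted to $(\mathcal{F}_t)_{t=0}^T$: $\mathcal{F}_t$-adaptedness, integrability, and the drift inequality $\mathbb{E}[Y_t \mid \mathcal{F}_{t-1}] \le Y_{t-1}$. The first two are routine: $x_t$, $\sigma_{t-1}(x_t)$, and $\Delta_t(x_t)$ are all $\mathcal{F}_t$-measurable by construction of the filtration, and every summand $Z_s$ is uniformly bounded since $|\Delta_s(x_s)|\le 2B$ (using $\|f\|_{\mathcal{H}_k}\le B$), $\sigma_{s-1}(x_s)\le 1$ by Assumption \ref{append_assump:kernel_less_one}, and $c_s$ is a deterministic constant at each fixed step. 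I would therefore devote the proof to establishing $\mathbb{E}[Z_t\mid \mathcal{F}_{t-1}]\le 0$, from which the super-martingale property follows because $Y_t - Y_{t-1} = Z_t$.

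The key observation that unlocks the drift inequality is that the event $E^f(t)$ is itself $\mathcal{F}_{t-1}$-measurable, since it is defined solely through $\mu_{t-1}$, $\sigma_{t-1}$, the deterministic constant $\beta_t$, and the fixed (unknown) function $f$, all of which are determined by the information up to time $t-1$. Consequently, the indicator $\mathbb{I}\{E^f(t)\}$ may be pulled outside the conditional expectation, and I would split the analysis into two cases according to whether $E^f(t)$ holds.

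On $\{E^f(t)\}$ the indicator equals $1$, so $\bar\Delta_t(x_t) = \Delta_t(x_t)$ and Lemma \ref{lemma::exp_reg_bound} applies directly, yielding
\[
\mathbb{E}[Z_t \mid \mathcal{F}_{t-1}] = \mathbb{E}[\Delta_t(x_t)\mid \mathcal{F}_{t-1}] - \frac{11c_t}{\eta}\mathbb{E}[\sigma_{t-1}(x_t)\mid \mathcal{F}_{t-1}] - \frac{2B}{t^2} \le 0.
\]
On the complementary event $\{E^f(t)\}^c$ the indicator equals $0$, so $\bar\Delta_t(x_t)\equiv 0$ and
\[
\mathbb{E}[Z_t \mid \mathcal{F}_{t-1}] = -\frac{11c_t}{\eta}\mathbb{E}[\sigma_{t-1}(x_t)\mid \mathcal{F}_{t-1}] - \frac{2B}{t^2} \le 0,
\]
since $\sigma_{t-1}\ge 0$ and the subtracted constants are nonnegative. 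Combining the two cases gives $\mathbb{E}[Y_t \mid \mathcal{F}_{t-1}] \le Y_{t-1}$, which completes the proof.

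I do not anticipate a genuine obstacle here; the definition of $\bar\Delta_t(x_t)=\Delta_t(x_t)\mathbb{I}\{E^f(t)\}$ has been engineered precisely so that Lemma \ref{lemma::exp_reg_bound} can be invoked on the good event and the regret contribution is automatically suppressed on the bad event. The only subtle point worth stating explicitly in the proof is the $\mathcal{F}_{t-1}$-measurability of $E^f(t)$, since without it one cannot cleanly factor the indicator out of the conditional expectation to separate the two cases.
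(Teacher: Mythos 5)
Your proof is correct and follows essentially the same route as the paper's: both reduce the claim to $\mathbb{E}[Z_t\mid\mathcal{F}_{t-1}]\le 0$, split on whether $E^f(t)$ holds (using its $\mathcal{F}_{t-1}$-measurability), invoke Lemma~\ref{lemma::exp_reg_bound} on the good event, and note that the indicator in $\bar\Delta_t(x_t)$ annihilates the regret term on the bad event. Your explicit remarks on adaptedness, integrability, and the measurability of $E^f(t)$ are minor additions that the paper leaves implicit.
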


\begin{proof}
It suffices to show that for all $t \in \{1, \cdots, T\}$ and any $\mathcal{F}_{t-1}$, $\mathbb{E}\left[Y_t - Y_{t-1} | \cF_{t-1} \right] \le 0$. Note that
\begin{equation}\label{eq::super_martingale}
\mathbb{E}\left[Y_t - Y_{t-1} | \cF_{t-1} \right] = \mathbb{E}\left[Z_t | \cF_{t-1} \right] = \mathbb{E}\left[ \bar \Delta_t(x_t)| \cF_{t-1}\right] - \frac{11c_t}{\eta}\mathbb{E}\left[\sigma_{t-1}(x_t)| \cF_{t-1}\right] - \frac{2B}{t^2}.
\end{equation} 
If $E^t(t)$ is false, we have $\mathbb{E}\left[ \bar \Delta_t(x_t)| \cF_{t-1}\right] = 0$, which shows that \eqref{eq::super_martingale} is less than or equal to zero. On the other hand, if $E^t(t)$ is true, from Lemma \ref{lemma::exp_reg_bound}, we can again conclude that \eqref{eq::super_martingale} is less than or equal to zero.
\end{proof}

\begin{lemma}\label{lemma:discrep_TS_bound}
Given any $\delta > 0$,
\begin{align*}
    \sum_{t=1}^T \Delta_t(x_t) &\le \frac{11c_T}{\eta}\sum_{t=1}^T\sigma_{t-1}(x_t) + \frac{2B\pi^2}{6} + \frac{4B+11c_T}{\eta}\sqrt{ 2T \log(3/\delta) },
\end{align*}
 with probability at least $1-2\delta/3$.
\end{lemma}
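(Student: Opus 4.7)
The plan is to apply a concentration inequality for super-martingales (Azuma--Hoeffding) to the process $(Y_t)_{t=0}^T$ from Definition~\ref{def::sup_martingale}, which Lemma~\ref{lemma::sup_martingale} already guarantees is a super-martingale with respect to $(\mathcal{F}_t)$. Specifically, I will bound the per-step magnitudes $|Z_t|$, invoke Azuma--Hoeffding to obtain a high-probability upper bound on $Y_T$, then convert $\bar\Delta_t(x_t)$ back to $\Delta_t(x_t)$ using a union bound with the event $\bigcap_{t\ge 1} E^f(t)$ from Lemma~\ref{lemma::event_prob}. The two failure probabilities ($\delta/3$ each) combine to give the stated $2\delta/3$ slack.

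First I would verify the bounded-differences condition. Since $|f(x)|\le \|f\|_{\mathcal{H}_k}\le B$, we have $|\Delta_t(x_t)|\le 2B$ and hence $|\bar\Delta_t(x_t)|\le 2B$. Together with $\sigma_{t-1}(x_t)\le\sqrt{k(x_t,x_t)}\le 1$ (Assumption~\ref{append_assump:kernel_less_one}) and $2B/t^2\le 2B$, a triangle inequality gives
\begin{align*}
|Z_t|\le 2B+\tfrac{11c_t}{\eta}\sigma_{t-1}(x_t)+\tfrac{2B}{t^2}\le 4B+\tfrac{11c_t}{\eta}\le \tfrac{4B+11c_T}{\eta},
\end{align*}
where the last step uses $\eta=1/(4e\sqrt\pi)<1$ together with the monotonicity of $c_t=\beta_t(1+\tilde c_t)$ in $t$ (both $\beta_t$ and $\tilde c_t$ are increasing by their definitions). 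This is the key deterministic estimate.

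Next I would apply Azuma--Hoeffding to the super-martingale $(Y_t)$ with $Y_0=0$ and increment bound $(4B+11c_T)/\eta$: with probability at least $1-\delta/3$,
\begin{align*}
Y_T=\sum_{t=1}^{T}\Big(\bar\Delta_t(x_t)-\tfrac{11c_t}{\eta}\sigma_{t-1}(x_t)-\tfrac{2B}{t^2}\Big)\le \tfrac{4B+11c_T}{\eta}\sqrt{2T\log(3/\delta)}.
\end{align*}
Rearranging, bounding $c_t\le c_T$ inside the sum, and using the classical estimate $\sum_{t\ge 1}1/t^2\le \pi^2/6$ for the residual term yields the claimed inequality, provided we can replace $\bar\Delta_t(x_t)$ by $\Delta_t(x_t)$.

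Finally, by Lemma~\ref{lemma::event_prob}, the event $\mathcal E:=\bigcap_{t\ge 1}E^f(t)$ has probability at least $1-\delta/3$, and on $\mathcal E$ we have $\bar\Delta_t(x_t)=\Delta_t(x_t)$ for every $t$. A union bound over the Azuma event and $\mathcal E$ gives the desired inequality with probability at least $1-2\delta/3$. I expect the main obstacle to be bookkeeping: verifying the monotonicity of $c_t$ to justify replacing $c_t$ by $c_T$ in both the increment bound and the final sum, and carefully checking that loosening $4B+11c_T/\eta$ to $(4B+11c_T)/\eta$ (using $\eta<1$) matches the constants in the stated bound — everything else is a routine application of Azuma--Hoeffding to the super-martingale already constructed.
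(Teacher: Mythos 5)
Your proposal is correct and follows essentially the same route as the paper's proof: bound the increments $|Z_t|$ by $(4B+11c_t)/\eta$ using $|\bar\Delta_t|\le 2B$, $\sigma_{t-1}\le 1$, and $\eta<1$; apply Azuma--Hoeffding to the super-martingale $(Y_t)$; replace $c_t$ by $c_T$ and bound $\sum_t 2B/t^2$ by $2B\pi^2/6$; and finish with a union bound over the Azuma event and $\bigcap_t E^f(t)$ to convert $\bar\Delta_t$ back to $\Delta_t$. The only cosmetic difference is that you insert the uniform increment bound $(4B+11c_T)/\eta$ before invoking Azuma, whereas the paper keeps the per-step bounds $(4B+11c_t)/\eta$ inside the square root and majorizes afterward; the two are equivalent.
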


\begin{proof}
By construction, 
$$
|Y_t-Y_{t-1}| = |Z_t| \le \left|\bar\Delta_t(x_t)\right| + \frac{11c_t}{\eta} \sigma_{t-1}(x_t) + \frac{2B}{t^2} \le 2B + \frac{11c_t}{\eta} + \frac{2B}{t^2} \le \frac{4B + 11c_t}{\eta}
$$
where the first inequality is due to the triangle inequality. The second inequality comes from the fact that $\left|\bar\Delta_t(x_t)\right| \le 2\sup_{x \in \mathcal{X}} |f(x)| \le 2\|f\|_{\cH_k} \le 2B$ and $\sigma_{t-1}(x) \le 1$ for all $x \in \mathcal{X}$. The third inequality follows from $\eta \le 1$. From the Azuma-Hoeffding inequality, with at least probability $1-\delta/3$, we have
\begin{align*}
    Y_T - Y_0 = \sum_{t=1}^T \bar \Delta_t(x_t) -\sum_{t=1}^T \frac{11c_t}{\eta} \sigma_{t-1}(x_t) - \sum_{t=1}^T \frac{2B}{t^2} \le \sqrt{2\log(3/\delta)\sum_{t=1}^T \frac{(4B+11c_t)^2}{\eta^2}}.
\end{align*}
In other words, we have
\begin{align*}
\sum_{t=1}^T \bar \Delta_t(x_t) &\le \sum_{t=1}^T \frac{11c_t}{\eta} \sigma_{t-1}(x_t) + \sum_{t=1}^T \frac{2B}{t^2} + \sqrt{2\log(3/\delta)\sum_{t=1}^T \frac{(4B+11c_t)^2}{\eta^2}} \\
&\le \frac{11c_T}{\eta}\sum_{t=1}^T  \sigma_{t-1}(x_t) + \frac{2B \pi^2}{6} + \frac{4B+11c_T}{\eta}\sqrt{ 2T \log(3/\delta) } 
\end{align*}
with at least probability $1-\delta/3$. From Lemma \ref{lemma::event_prob}, we know $E^f(t)$ holds for all $t \ge 1$ with probability at least $1- \delta/3$. In other words, by definition, $\Delta_t(x_t) = \bar \Delta_t(x_t)$ for all $t \ge 1$ with probability at least $1- \delta/3$. Applying the union bound, we obtain the statement.
\end{proof}

\begin{theorem}[Restatement of Theorem \ref{thm:random_TS}]
Under assumptions \ref{append_assump:kernel_less_one}, \ref{append_assump:kernel_smooth} and \ref{append_assum:grid}, suppose $f \in \mathcal{H}_k$ with $\|f\|_{\cH_k} \le B$, for some $B >0$. With probability at least $1-\delta$, the random grid GP-TS with $\beta_t = B + R\sqrt{2(\gamma_{t-1} + 1 + \log(3/\delta))}$ yields 
\begin{align*}
R_T = \mathcal{O}\left(\gamma_T \sqrt{T \log T} + T^{\frac{d-1}{d} + \xi} \right),
\end{align*} 
for arbitrarily small $\xi > 0$.
\end{theorem}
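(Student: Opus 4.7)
The plan is to mirror the structure of the random grid GP-UCB proof (Theorem \ref{append_thm:random_UCB}) but replace the deterministic UCB confidence argument with the super-martingale machinery already assembled in Lemmas \ref{lem:thompson_gap}--\ref{lemma:discrep_TS_bound}. The first move is the standard decomposition
\begin{align*}
R_T \;=\; \sum_{t=1}^T \bigl(f(x^*)-f([x^*]_t)\bigr) \;+\; \sum_{t=1}^T \bigl(f([x^*]_t)-f(x_t)\bigr) \;=\; \sum_{t=1}^T \bigl(f(x^*)-f([x^*]_t)\bigr) \;+\; \sum_{t=1}^T \Delta_t(x_t),
\end{align*}
where $[x^*]_t \in \mathcal{X}_t$ denotes the point of the random grid closest to $x^*$. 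This splits the regret into a ``grid approximation'' term, which only depends on how well $\mathcal{X}_t$ covers $\mathcal{X}$, and a ``saturated/unsaturated'' term of exactly the kind handled by the existing super-martingale $(Y_t)$.

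Next I would bound the grid approximation term. Invoking the Lipschitz-like bound $f(x^*)-f([x^*]_t) \le C\|x^*-[x^*]_t\| \le C\,h_t$ from Lemma 1 of \citep{de2012regret}, and then applying Lemma \ref{lemma:grid_disc} at confidence level $\delta/3$, I obtain, with probability at least $1-\delta/3$,
\begin{align*}
\sum_{t=1}^T \bigl(f(x^*)-f([x^*]_t)\bigr) \;\le\; C\sum_{t=1}^T h_t \;=\; \tilde{\mathcal{O}}\bigl(T^{(d-1)/d + \xi}\bigr).
\end{align*}
For the second term, Lemma \ref{lemma:discrep_TS_bound} (with confidence $2\delta/3$) gives
\begin{align*}
\sum_{t=1}^T \Delta_t(x_t) \;\le\; \frac{11\,c_T}{\eta}\sum_{t=1}^T\sigma_{t-1}(x_t) \;+\; \frac{B\pi^2}{3} \;+\; \frac{4B+11c_T}{\eta}\sqrt{2T\log(3/\delta)}.
\end{align*}

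The final step is to plug in the growth rates. By Lemma \ref{lem:sigma_bound}, $\sum_{t=1}^T \sigma_{t-1}(x_t) = \mathcal{O}(\sqrt{T\gamma_T})$. By Definition \ref{def:CT}, $c_T = \beta_T(1+\tilde c_T) = \mathcal{O}(\sqrt{\gamma_T}\,\sqrt{\log T})$, so $c_T\sqrt{T\gamma_T} = \mathcal{O}(\gamma_T\sqrt{T\log T})$ and $c_T\sqrt{T\log T} = \mathcal{O}(\sqrt{\gamma_T}\,T^{1/2}\log T)$, both of which are dominated by $\gamma_T\sqrt{T\log T}$. A union bound over the three $\delta/3$-events (the confidence event $E^f$ for all $t$, the super-martingale deviation, and the grid discrepancy of Lemma \ref{lemma:grid_disc}) yields the claimed bound.

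The main obstacle is conceptual rather than computational: one has to make sure that the saturated/unsaturated argument continues to go through when the search set is the \emph{random} grid $\mathcal{X}_t$ rather than a deterministic discretization. This is why the saturated set $S_t$ was defined relative to $[x^*]_t \in \mathcal{X}_t$ instead of $x^*$, and why the filtration $\mathcal{F}_t$ in Definition \ref{def::sup_martingale} must also include $\mathcal{X}_{t+1}$; conditionally on $\mathcal{F}_{t-1}$, the grid is fixed, so Lemmas \ref{lemma::prob_lower}--\ref{lemma::exp_reg_bound} apply verbatim. Once that issue is settled, the rest is bookkeeping.
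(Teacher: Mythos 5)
Your proposal is correct and follows essentially the same route as the paper's proof: the identical decomposition of $R_T$ via $[x^*]_t$, the Lipschitz bound combined with Lemma \ref{lemma:grid_disc} for the grid term, Lemma \ref{lemma:discrep_TS_bound} together with Lemma \ref{lem:sigma_bound} and $c_T = \mathcal{O}(\sqrt{\gamma_T\log T})$ for the $\Delta_t$ term, and a final union bound. Your closing observation about conditioning on the filtration that includes $\mathcal{X}_{t+1}$ so that the saturated-set argument applies to a random grid is exactly the point the paper's setup (Definition \ref{def::sup_martingale} and the surrounding lemmas) is designed to handle.
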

\begin{proof}
Note that
\begin{align*}
R_T &= \sum_{t=1}^T f(x^*) - f(x_t) \\
&= \sum_{t=1}^T f(x^*) - f([x^*]_t) + f([x^*]_t) - f(x_t) \\
&= \sum_{t=1}^T f(x^*) - f([x^*]_t) + \sum_{t=1}^T \Delta_t(x_t) \\
&\le C \sum_{t=1}^T \|x^*-[x^*]_t\|_2 + \sum_{t=1}^T \Delta_t(x_t) \\
& \le C \sum_{t=1}^T h_t + \sum_{t=1}^T \Delta_t(x_t),
\end{align*}
where the first inequality follows from the Lipschitzness of $f$, which was shown in Lemma 1 of \citep{de2012regret} and the second inequality is due to the definition of fill distance $h_t = \sup_{x \in \mathcal{X}}\inf_{\tilde x_i \in \mathcal{X}_t} \|x-\tilde x_i\|$. From Lemma \ref{lemma:grid_disc}, we know that the leading term $C \sum_{t=1}^T h_t = \mathcal{O}\left(T^{\frac{d-1}{d} + \xi}\right)$. For the second term, note that from Lemma \ref{lemma:discrep_TS_bound}
\begin{align*}
    \sum_{t=1}^T \Delta_t(x_t) &\le \frac{11c_T}{\eta}\sum_{t=1}^T\sigma_{t-1}(x_t) + \frac{2B\pi^2}{6} + \frac{4B+11c_T}{\eta}\sqrt{ 2T \log(3/\delta) } \\
    &= \mathcal{O}\left(c_T \sqrt{4(T+2)\gamma_T} + c_T\sqrt{T}\right)
\end{align*}
where the last equality is due to Lemma
\ref{lem:sigma_bound}. From the definition of $c_t$, we know $c_T = \mathcal{O}(\sqrt{\gamma_T \log T})$. Since the leading term dominates, we have
$$
\sum_{t=1}^T \Delta_t(x_t) = \mathcal{O}\left(\gamma_T \sqrt{T \log T}\right).
$$
Applying union bound and combining everything, we get the result.
\end{proof}

\section{Additional Details of Experiments}\label{app:exp}

\subsection{Experimental Setup of Real-World AutoML Task}

In the 11-dimensional real-world hyperparameter tuning task, we focus on improving the validation accuracy of the gradient boosting model (from the Python \texttt{sklearn} package) on the breast-cancer dataset (from the UCI machine learning repository). The 11 hyperparameters are:

\begin{itemize}
    \item Loss, (string, “logloss” or “exponential”).
    \item Learning rate, (float, (0, 1)).
    \item Number of estimators, (integer, [20, 200]).
    \item Fraction of samples to be used for fitting the individual base learners, (float, (0, 1)).
    \item Function to measure the quality of a split, (string, “friedman mse” or “squared error”).
    \item Minimum number of samples required to split an internal node, (integer, [2, 10]).
    \item Minimum number of samples required to be at a leaf node, (integer, [1, 10]).
    \item Minimum weighted fraction of the sum total of weights, (float, (0, 0.5)).
    \item Maximum depth of the individual regression estimators, (integer, [1, 10]).
    \item Number of features to consider when looking for the best split, (float, “sqrt” or “log2").
    \item Maximum number of leaf nodes in best-first fashion, (integer, [2, 10]).
\end{itemize}

For integer-valued parameters, we conducted optimization on the continuous domain and rounded to the nearest integer value.

\subsection{Ablation Studies}

\subsubsection{Experiments with Different Grid Sizes}
We have further added a numerical experiment investigating the effect of linearly varying grid size versus a fixed grid size of 100 in the 11-dimensional machine learning hyperparameter tuning problem. From Figure \ref{fig:fix}, it can be seen that the linearly varying grid size performs slightly better than the fixed grid size in terms of the cumulative regret plot.

\begin{figure}[!htbp]
    \centering
\begin{minipage}{0.5\linewidth}\centering
		\includegraphics[width=\textwidth]{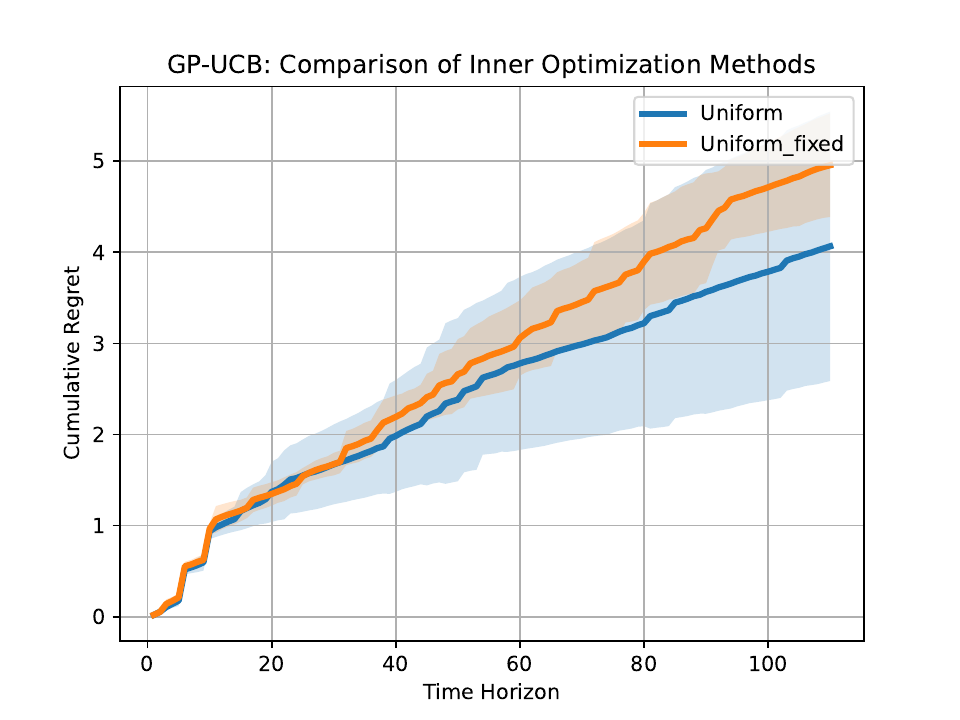}
	\end{minipage}
    \caption{Performances with uniform random and uniform fixed grid sizes.}
    \label{fig:fix}
\end{figure}

\subsubsection{Experiments with a Smaller Initial Design Points}
We further investigated the impact of a smaller initial design. In Section \ref{sec:exp}, we used $n = 10d$; here, we repeat those simulations with $n = 5d$. Results are shown in Figure \ref{fig:cum_reg_small} and Figure \ref{fig:comp_time_small}. Once again, the random grid search GP-UCB achieved competitive performance while substantially improving computational efficiency.

\begin{figure}[!htbp]
    \centering
    \begin{minipage}{0.5\linewidth}\centering
		\includegraphics[width=\textwidth]{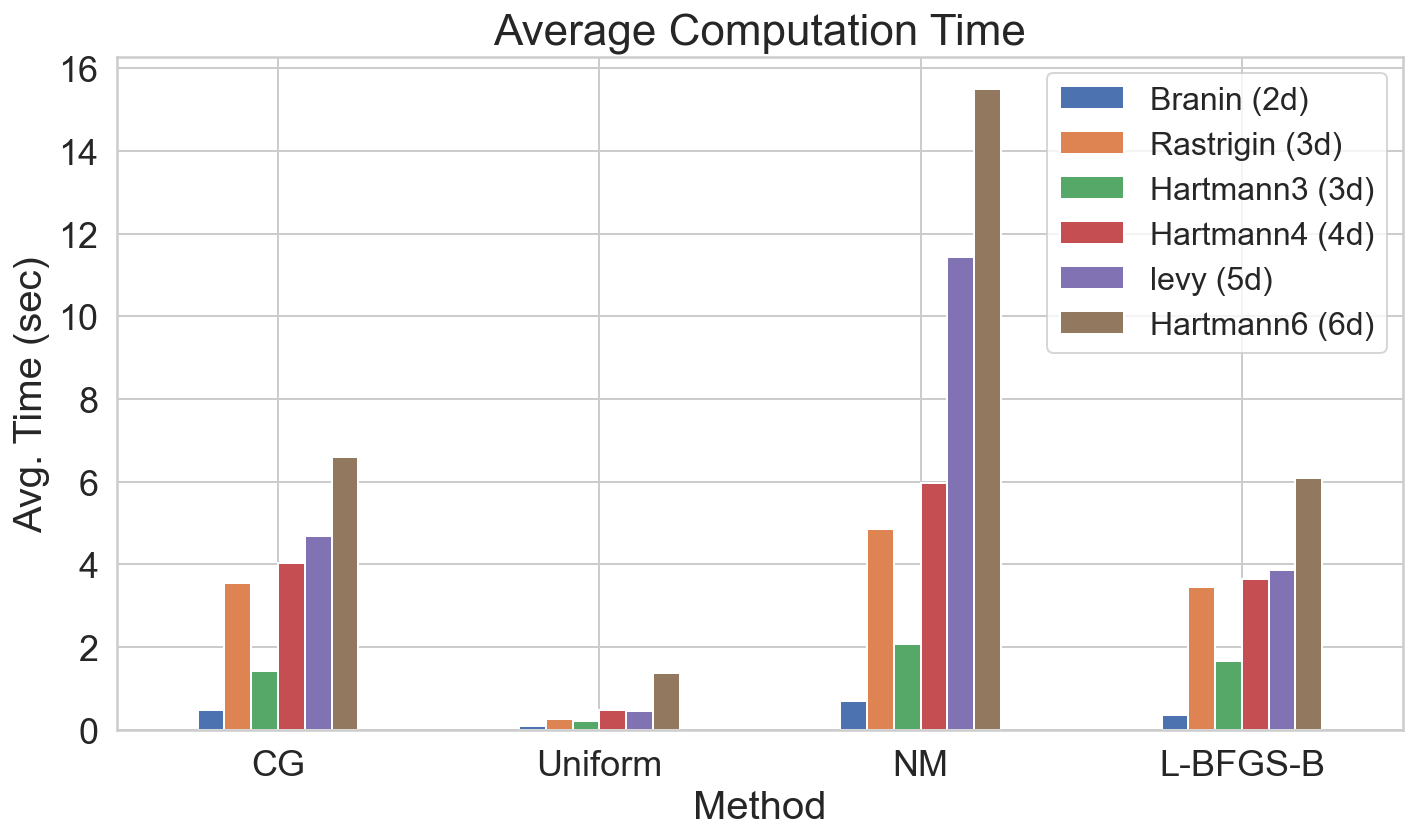}
	\end{minipage}
    \caption{Average computational time of different acquisition optimization methods with a smaller initial design points ($n = 5d$, in the manuscript, we considered $n = 10d$). }
    \label{fig:comp_time_small}
\end{figure}

\begin{figure}[!htbp]
    \centering
    \begin{minipage}{0.8\linewidth}\centering
		\includegraphics[width=\textwidth]{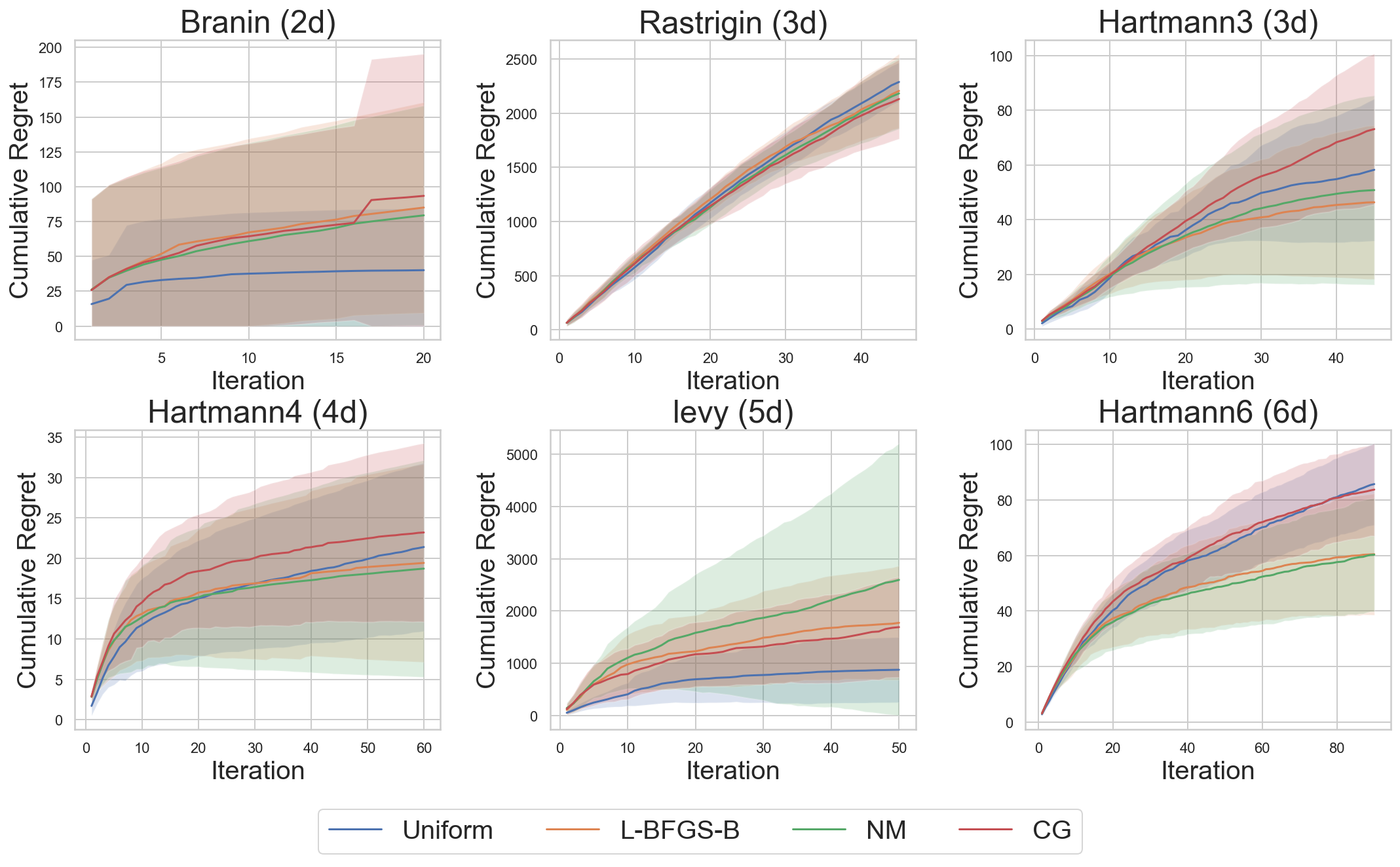}
	\end{minipage}
    %\begin{minipage}{0.6\linewidth}\centering
		%\includegraphics[width=\textwidth]
        %{UAI2025/BO_UCB_GRID_COMP_TIME_SMALL_INITIAL.png}
	%	(b) Average computational time
	%\end{minipage}
    \caption{Cumulative regret of different acquisition optimization methods with a smaller initial design points ($n = 5d$, in the manuscript, we considered $n = 10d$). }
    \label{fig:cum_reg_small}
\end{figure}

% Optionally include supplemental material (complete proofs, additional experiments and plots) in appendix. All such materials \textbf{SHOULD be included in the main submission.}

%%%%%%%%%%%%%%%%%%%%%%%%%%%%%%%%%%%%%%%%%%%%%%%%%%%%%%%%%%%%

\end{document}